\newtheorem{lemma}{Lemma}
\newtheorem{prop}{Proposition}
\newcommand{\bR}{\textbf{R}}
\newcommand{\Tr}{\text{Tr}}
\newcommand{\diag}{\text{diag}}
\newcommand{\Diag}{\text{Diag}}
\newcommand{\vect}{\text{vec}}
\newcommand{\rank}{\text{rank}}
\newcommand{\card}{\text{card}}
\newcommand{\epi}{\text{epi}}
\newcommand{\cN}{\mathcal{N}}
\newcommand{\cC}{\mathcal{C}}
\newcommand{\cA}{\mathcal{A}}
\newcommand{\cB}{\mathcal{B}}
\renewcommand{\vec}[1]{\boldsymbol{#1}}
\newcommand{\vx}{\vec{x}}
\newcommand{\vX}{\vec{X}}
\newcommand{\vY}{\vec{Y}}
\newcommand{\vw}{\vec{w}}
\newcommand{\vW}{\vec{W}}
\newcommand{\vz}{\vec{z}}
\newcommand{\va}{\vec{a}}
\newcommand{\vS}{\vec{S}}
\newcommand{\vA}{\vec{A}}
\newcommand{\vD}{\vec{D}}
\newcommand{\vR}{\vec{R}}
\newcommand{\vG}{\vec{G}}
\newcommand{\vH}{\vec{H}}
\newcommand{\vP}{\vec{P}}
\newcommand{\vQ}{\vec{Q}}
\newcommand{\vF}{\vec{F}}
\newcommand{\vI}{\vec{I}}
\newcommand{\vU}{\vec{U}}
\newcommand{\vu}{\vec{u}}
\newcommand{\vV}{\vec{V}}
\newcommand{\vv}{\vec{v}}
\newcommand{\vPsi}{\vec{\Psi}}
\newcommand{\vphi}{\vec{\phi}}
\newcommand{\vLambda}{\vec{\Lambda}}
\newcommand{\vlambda}{\vec{\lambda}}
\newcommand{\vXi}{\vec{\Xi}}
\newcommand{\vxi}{\vec{\xi}}
\newcommand{\vSigma}{\vec{\Sigma}}
\newcommand{\vzr}{\vec{0}}
\newcommand{\vone}{\vec{1}}
\begin{document}

\title{Orthogonal Sparse PCA and Covariance Estimation via Procrustes Reformulation}

\author{Konstantinos Benidis, Ying Sun, Prabhu Babu, 		and~Daniel~P.~Palomar,~\IEEEmembership{Fellow,~IEEE}%
\thanks{The authors are with the Department of Electronic and Computer Engineering, Hong Kong University of Science and Technology (HKUST), Hong Kong. E-mail: \{kbenidis, ysunac, eeprabhubabu, palomar\}@ust.hk}}

\maketitle

\begin{abstract}
	The problem of estimating sparse eigenvectors of a symmetric matrix attracts a lot of attention in many applications, especially those with high dimensional data set. While classical eigenvectors can be obtained as the solution of a maximization problem, existing approaches formulate this problem by adding a penalty term into the objective function that encourages a sparse solution. However, the resulting methods achieve sparsity at the expense of sacrificing the orthogonality property. In this paper, we develop a new method to estimate dominant sparse eigenvectors without trading off their orthogonality. The problem is highly non-convex and hard to handle. We apply the MM framework where we iteratively maximize a tight lower bound (surrogate function) of the objective function over the Stiefel manifold. The inner maximization problem turns out to be a rectangular Procrustes problem, which has a closed form solution. In addition, we propose a method to improve the covariance estimation problem when its underlying eigenvectors are known to be sparse. We use the eigenvalue decomposition of the covariance matrix to formulate an optimization problem where we impose sparsity on the corresponding eigenvectors. Numerical experiments show that the proposed eigenvector extraction algorithm matches or outperforms existing algorithms in terms of support recovery and explained variance, while the covariance estimation algorithms improve significantly the sample covariance estimator. 
\end{abstract}

\begin{IEEEkeywords}
	Sparse PCA, Procrustes, Stiefel manifold, minorization-maximization, covariance estimation. 
\end{IEEEkeywords}

\section{Introduction}
\label{sec:intro}

Principal Component Analysis (PCA) is a popular technique for data analysis and dimensionality reduction \cite{Jolliffe2002principal}. It has been used in various fields of engineering and science with a large number of applications such as machine learning, financial asset trading, face recognition, and gene expression data analysis. Given a data matrix $\vA\in\bR^{n\times m}$, with $\rank(\vA)=r$, PCA finds sequentially orthogonal unit vectors $\vv_1,\dots,\vv_r$, such that the variance of $\vA\vv_i$, which essentially is the projection of the data on the direction $\vv_i$, for $i=1,\dots,r$, is maximized. The directions $\vv_i$ are known as principal component (PC) loadings while $\vA\vv_i$ are the corresponding principal components (PCs). The PC loadings correspond to the right singular vectors of $\vA$ or to the eigenvectors of the corresponding sample covariance matrix $\vS=\frac{1}{n}\vA^T\vA$. 

PCA has many optimal properties that made it so widely used. First, it captures the directions of maximum variance of the data, thus we can compress the data with minimum information loss. Further, these directions are orthogonal to each other, i.e., they form an orthonormal basis. Finally, the PCs are uncorrelated which aids further statistical analysis. On the other hand, a particular disadvantage of PCA is that the PCs are usually linear combinations of all variables, i.e., the eigenvectors of $\vS$ are dense. Even if the underlying covariance matrix from which the samples are generated indeed has sparse eigenvectors, we do not expect to get a sparse result due to estimation error. Further, in many applications, the PCs have an actual physical meaning (e.g. gene expression). Thus, a sparse eigenvector could help significantly the interpretability of the result. 

Many different techniques have been proposed in this direction during the last two decades. In one of the first approaches, Jolliffe used various rotating techniques to obtain sparse loading vectors \cite{Jolliffe95rotation}. He showed though that it is impossible to preserve both the orthogonality of the loadings and the uncorrelatedness of the rotated components. In the same year, Cadima and Jolliffe suggested to simply set to zero all the elements that their absolute value is smaller than a threshold \cite{Cadima95loading}. In \cite{Jolliffe03modified}, the authors propose the SCoTLASS algorithm which maximizes the Rayleigh quotient of the covariance matrix, while sparsity is enforced with the Lasso penalty \cite{Tibshirani96lasso}. Many recent approaches are based on reformulations or convex relaxations. For example in \cite{Zou06SparsePCA}, Zou et al. formulate the sparse PCA problem as a ridge regression problem and they impose sparsity again using the Lasso penalty. In \cite{d'Aspremont07direct}, d'Aspremont et al. form a semidefinite program (SDP) after a convex relaxation of the sparse PCA problem, leading to the DSPCA algorithm. In \cite{d'Aspremont08Optimal}, the authors propose a greedy algorithm accompanied with a certificate of optimality. Low rank approximation of the data matrix is considered in \cite{Shen08Sparse}, under sparsity penalties, while in \cite{Journee10GeneralizedSPCA}, Journe\'e et al. reformulated the problem as an alternating optimization problem, resulting in the GPower algorithm. This algorithm turns out to be identical to the rSVD algorithm in \cite{Shen08Sparse}, except for the initialization and the post-processing phases. Similar power-type truncation methods were considered in \cite{Witten09penalized,Ma13Sparse}. In \cite{Yuan13truncated}, the authors ppropose a truncated power iteration
method. This method is similar to the classical power method, with
an additional truncation operation to ensure sparsity. Finally, in \cite{Song15Sparse}, the  sparse generalized eigenvalue problem is considered only for the first principal component, where the minorization-maximization (MM) framework is used. 

In all the aforementioned algorithms, apart from the fact that the PCs are correlated, the orthogonality property of the loadings is also sacrificed for sparse solutions. The only exception is the SCoTLASS algorithm that is suboptimal in the sense that it does not find jointly a sparse basis, but sequentially. The advantages of an orthogonal basis are well known. For instance, an orthonormal basis can be extremely useful since it can reduce the potential computational cost of any post-processing phase; this may not seem much for vector spaces of small dimension but it is invaluable for high dimensional vector spaces or function spaces. Consider for example the solution of a linear system via Gaussian elimination. It requires $O(m^3)$ operations for a non-orthogonal basis, compared to $O(m)$ operations if the basis is orthogonal, where $m$ is the dimension. This, among other optimal properties, motivates us to find sparse loading vectors that maintain their orthogonality.

Another issue in many contemporary applications is that the number of features $m$ in the corresponding datasets is extremely large while in many cases the number of samples $n$ is limited. It is well know by now that the sample covariance $\vS$ can be a very poor estimate of the population covariance matrix $\vSigma$ if the number of samples is restricted. Since the population covariance matrix $\vSigma$ is unknown, the classical PCA estimates the leading population eigenvectors by the sample covariance matrix $\vS$, which coincides with the maximum likelihood estimator (MLE) if $n\geq m$ and under the assumption that the samples are independent and identically distributed (i.i.d.), drawn from an $m$-dimensional Gaussian distribution. Many methods have been proposed to improve the covariance estimation in different settings and for different applications, e.g., for some representative works see \cite{Tadjudin98Covariance,Ledoit04honey,Ledoit04Well,Bickel08Regularized,Sun15Regularized,Friedman08Sparse,Levina08Sparse,d'Aspremont08First} and references therein. None of them has considered though to combine the prior information of sparsity in the eigenvectors with the covariance estimation.  

In this paper we focus and solve the two aforementioned problems: 1) the orthogonal sparse eigenvector extraction and 2) the joint covariance estimation with sparse eigenvectors. First, we apply the MM framework on the sparse PCA problem which results in solving a sequence of rectangular Procrustes problems. With this approach, we obtain sparse results but with the orthogonality property retained.
Then, we consider low sample settings where the population covariance matrices are known to have sparse eigenvectors. We formulate a covariance estimation problem where we impose sparsity on the eigenvectors. We propose two methods, i.e., alternating and joint estimation of the eigenvalues and eigenvectors, based on the MM framework. Both methods reduce to an iterative closed-form update with bounded iterations for the eigenvalues and a sequence of Procrustes problems for the eigenvectors, which maintain their orthogonality. 

Throughout the paper we consider real-valued matrices for simplicity. However, all the results hold for complex-valued matrices with trivial modifications: in the complex-valued case $|x_i|$ denotes the modulus of $x_i$ rather than the absolute value, while we should replace the transpose operation (i.e., $(\cdot)^T$) with the conjugate transpose operation (i.e., $(\cdot)^H$). Finally, we do not assume direct access to the data matrix $\vA$. Nevertheless, all the formulations hold if either the data matrix $\vA$ or the sample covariance matrix $\vS$ is provided.

The rest of the paper is organized as follows: In Section \ref{sec:prob_st} we first formulate the sparse eigenvector extraction and the covariance estimation problems. Then, we give a short review of the MM framework which will be the main tool to tackle both of the aforementioned problems. Finally we present the Procrustes problem since the solution of both of our problems involve certain Procrustes reformulations. In Section \ref{sec:sp_eig_est} we present the solution of the sparse eigenvector extraction problem. In Section \ref{sec:cov_est} we consider the problem of joint covariance estimation with sparse eigenvectors and we propose two algorithms to iteratively minimize the associated objective function. Section \ref{sec:num_exper} presents numerical experiments on artificial and real data and the conclusions are given in Section \ref{sec:conclusion}. 

\textit{Notation:} $\bR$ denotes the real field, $\bR^m$ ($\bR^m_+$) the set of (non-negative) real vectors of size $m$, and $\bR^{n\times m}$ the set of real matrices of size $n\times m$. Vectors are denoted by bold lower case letters and matrices by bold capital letters i.e., $\vx$ and $\vX$, respectively. The $i$-th entry of a vector is denoted by $x_i$, the $i$-th column of matrix $\vX$ by $\vx_{i}$, and the ($i$-th,$j$-th) element of a matrix by $x_{ij}$. A size $m$ vector of ones is denoted by $\vone_m$, while $\vI_m$ denotes the identity matrix of size $m$. $\vect(\cdot)$ denotes the vectorized form of a matrix. The superscripts $(\cdot)^T$ and $(\cdot)^H$ denote the transpose and conjugate transpose of a matrix, respectively, and $\Tr(\cdot)$ its trace. $\Diag(\vX)$ is a column vector consisting of all the diagonal elements of $\vX$ and $\diag(\vx)$ is a diagonal matrix formed with $\vx$ at its principal diagonal. Given a vector $\vx\in\bR^{mn}$, $[\vx]_{m\times n}$ is an $m\times n$ matrix such that $\vect([\vx]_{m\times n})=\vx$. $\|\vx\|_0$ denotes the number of nonzero elements of a vector $\vx\in\bR^m$. $\vS\succcurlyeq0$ means that the symmetric matrix $\vS$ is positive semidefinite, while $\lambda_{\text{max}}^{(\vS)}$ denotes its maximum eigenvalue. $\vX\otimes\vY$ is the Kronecker product of the matrices $\vX$ and $\vY$. $\cN(\vec{\mu},\vSigma)$ denotes the normal distribution with mean $\vec{\mu}$ and covariance matrix $\vSigma$. $\card(\cA)$ denotes the cardinality of the set $\cA$, $\cA\bigcup\cB$ denotes the union of the sets $\cA$ and $\cB$, and $\cA\setminus\cB$ their difference. $[i:j]$ with $i\leq j$, denotes the set of all integers between (and including) $i$ and $j$.

\section{Problem Statement and Background}
\label{sec:prob_st}

\subsection{Sparse Eigenvector Extraction}

Given a data matrix $\vA\in\bR^{n\times m}$, encoding $n$ samples of dimension $m$, we can extract the leading eigenvector of the scaled sample covariance matrix $\vS=\vA^T\vA$ by solving the following optimization problem:
\begin{equation}
	\begin{aligned}
		\underset{\vu}{\text{maximize}}&\quad \vu^T\vS\vu\\
		\text{subject to}&\quad \vu^T\vu=1.
	\end{aligned}
\end{equation}
In order to  get a sparse result, we can include a regularization term in the objective that imposes sparsity, i.e., 
\begin{equation}
	\begin{aligned}\label{opt:single_eigenvec estimation}
		\underset{\vu}{\text{maximize}}&\quad \vu^T\vS\vu - \rho\|\vu\|_0\\
		\text{subject to}&\quad \vu^T\vu=1,
	\end{aligned}
\end{equation}
where $\rho$ is a regularization parameter.

Problem \eqref{opt:single_eigenvec estimation} can be generalized to extract multiple eigenvectors as follows:
\begin{equation}
	\begin{aligned}\label{gen_sparse_eig_opt}
		\underset{\vU}{\text{maximize}}&\quad \Tr\left(\vU^T\vS\vU\vD\right) - \sum_{i=1}^{q}\rho_i\|\vu_i\|_0\\
		\text{subject to}&\quad \vU^T\vU=\vI_q.
	\end{aligned}
\end{equation}
Here, $q$ is the number of eigenvectors we wish to estimate, $\vU\in\bR^{m\times q}$, and $\vD\succcurlyeq0$ is a diagonal matrix giving weights to the different eigenvectors. In the case where $q=m$, $\vD$ should be different from the (scaled) identity matrix since the first term reduces to a constant and $\vU^\star=\vP_m$, where $\vP_m$ is a permutation matrix of size $m$. 

The optimization problem \eqref{gen_sparse_eig_opt} involves the maximization of a non-concave discontinuous objective function over a non-convex set, thus the problem is too hard to deal with directly.

In order to deal with the discontinuity of the $\ell_0$-norm, we approximate it by a continuous function $g_p(x)$, where $p>0$ is a parameter that controls the approximation. Following \cite{Song15Sparse}, we consider an even function defined on $\bR$, which is differentiable everywhere except at $0$, concave and monotone increasing on $[0,+\infty)$, with $g_p(0)=0$. Among the functions that satisfy the aforementioned criteria, in this paper we choose the function 
\begin{equation}
	g_p(x)=\frac{\log\left(1+|x|/p\right)}{\log\left(1+1/p\right)},
\end{equation}
with $0<p\leq1$. This function is also used to replace the $\ell_1$-norm in \cite{Candes08Enhancing}, and leads to the iteratively reweighted $\ell_1$-norm minimization algorithm.

The function $g_p(\cdot)$ is not smooth which may cause an optimization algorithm to get stuck at a non-differentiable point \cite{Figueiredo07Majorization}. To handle non-smoothness of $g_p(\cdot)$
we use a smoothened version, based on Nesterov's smooth minimization technique presented in \cite{Nesterov05Smooth} and following the results of \cite{Song15Sparse}, which is defined as:
\begin{equation}\label{eq:L0 approximaion function}
	g_p^{\epsilon}\left(x \right)= \begin{cases}
		\frac{x^2}{2\epsilon(p+\epsilon)\log(1+1/p)},& |x|\leq\epsilon,\\
		\frac{\log\left(\frac{p+|x|}{p+\epsilon}\right)+\frac{\epsilon}{2(p+\epsilon)}}{\log(1+1/p)},& |x|>\epsilon,
	\end{cases}
\end{equation}
with $0<p\leq1$ and $0<\epsilon\ll1$.

This leads to the following approximate problem:
\begin{equation}
	\begin{aligned} 
		\underset{\vU}{\text{maximize}}&\quad \textrm{Tr}\left(\vU^T\vS\vU\vD\right) - \sum_{j=1}^{q}\rho_j \sum_{i=1}^{m}g_p^{\epsilon}\left(u_{ij} \right)\\
		\text{subject to}&\quad \vU^T\vU=\vI_q.
	\end{aligned}\label{eq:gen_sparse_eig_opt_approximate}
\end{equation}

The problem presented in \cite{Song15Sparse}, is a special case of the above optimization problem, with $q=1$. Nevertheless, it is not possible to follow the same procedure as in \cite{Song15Sparse} to solve the problem due to the orthogonality constraint. Instead, we tackle this problem using the MM algorithm, which results in solving a sequence of rectangular Procrustes problems that have a closed-form solution based on singular value decomposition (SVD). 

\subsection{Covariance Estimation}

We first consider a typical covariance estimation problem. We assume that the random variable $\vx\in\bR^m$ follows a zero mean Gaussian distribution with covariance $\vec{\Sigma}$, i.e.,  $\vx\sim\cN(\vzr,\vec{\Sigma})$. Given $n\geq m$ i.i.d. samples $\vx_i$, with $i=1,\dots,n$, our goal is to estimate $\vec{\Sigma}$. The maximum likelihood estimator of $\vec{\Sigma}$ is given by the solution of the following problem:
\begin{equation}
	\begin{aligned}\label{opt:covariance estimation}
		\underset{\vec{\Sigma}}{\text{minimize}}&\quad\log\det\left(\vec{\Sigma}\right)+\Tr\left(\vS\vec{\Sigma}^{-1}\right)\\
		\text{subject to}&\quad \vec{\Sigma}\succcurlyeq0,
	\end{aligned}
\end{equation}
where $\vS$ is the sample covariance matrix, i.e.,
\begin{equation}
	\vS=\frac{1}{n}\sum_{i=1}^{n}\vx_i\vx_i^T.
\end{equation}
The above problem is not convex but it can be easily transformed into a convex one by setting $\vPsi=\vec{\Sigma}^{-1}$. With this transformation we get:
\begin{equation}
	\begin{aligned}\label{opt:covariance estimation inverse}
		\underset{\vPsi}{\text{minimize}}&\quad-\log\det\left(\vPsi\right)+\Tr\left(\vS\vPsi\right)\\
		\text{subject to}&\quad \vPsi\succcurlyeq0.
	\end{aligned}
\end{equation}
The optimal solution of this problem is $\vPsi=\vS^{-1}$, thus, the MLE of the covariance matrix is $\vec{\Sigma}=\vS$, which is simply the sample covariance matrix.

Now, we would like to estimate the population covariance matrix $\vSigma$ while we impose sparsity on its eigenvectors. Thus, we need to reformulate the covariance estimation problem in terms of eigenvalues and eigenvectors. Further, we add a cardinality penalty on the $q$ principal eigenvector. Notice though that we estimate all $m$ eigenvectors and not only the $q$ principal ones since it is a covariance estimation and not an eigenvector extraction problem. 

Consider the eigenvalue decomposition of $\vPsi$, i.e., $\vPsi=\vU\vLambda\vU^T$, with $\vU,\vLambda\in\bR^{m\times m}$ and $\vLambda=\diag(\vlambda)\succcurlyeq0$. Then, we can formulate our problem as follows:
\begin{align}\label{opt:cov_est_block_MM}
	\underset{\vU,\vLambda}{\text{minimize}}&\!\quad-\!\log\det\!\left(\!\vLambda\!\right)\!+\!\Tr\left(\vS\vU\vLambda\vU^T\right)\!\!+\!\sum_{i=1}^{q}\rho_i\|\vu_i\|_0\nonumber\\
	\text{subject to}&\quad \vLambda\succcurlyeq0,\nonumber\\
	&\quad \lambda_{i}\leq \lambda_{i+1}, \quad i=1,\dots, q-1,\\
	&\quad \lambda_{q}\leq \lambda_{q+i}, \quad i=1,\dots,m-q,\nonumber\\
	&\quad \vU^T\vU=\vI.\nonumber
\end{align}
Let us first make some comments on the above problem. We penalize the cardinality of the first $q\leq m$ principal eigenvectors where each of them is associated with a different sparsity inducing parameter $\rho_i$. Thus, we need to keep the order of the first $q$ eigenvectors intact. We succeed this by imposing ordering to the corresponding eigenvalues. Notice also that the principal eigenvector corresponds to the smallest eigenvalue of $\vPsi$ since  $\vPsi=\vSigma^{-1}$. 

It will be useful in the following to expand the sparsity term and include all eigenvectors by setting the redundant sparsity inducing parameters to zero, i.e., $\rho_i=0$ for $i=q+1,\dots,m$. Again, we approximate the $\ell_0$-norm by a differentiable function $g_p^{\epsilon}\left(\cdot \right)$, given by \eqref{eq:L0 approximaion function}. This leads to the following approximate problem:
\begin{equation}
	\begin{aligned}\label{opt:cov_est_block_MM_approxL0}
		\underset{\vU,\vLambda}{\text{minimize}}&\quad-\log\det\left(\vLambda\right)+\Tr\left(\vS\vU\vLambda\vU^T\right)\\
		&\quad +\sum_{j=1}^{m}\rho_j \sum_{i=1}^{m}g_p^{\epsilon}\left(u_{ij}\right)\\
		\text{subject to}&\quad \vLambda\succcurlyeq0,\\
		&\quad \lambda_{i}\leq \lambda_{i+1}, \quad i=1,\dots, q-1,\\
		&\quad \lambda_{q}\leq \lambda_{q+i}, \quad i=1,\dots,m-q,\\
		&\quad \vU^T\vU=\vI.
	\end{aligned}
\end{equation}

Although in \eqref{opt:cov_est_block_MM_approxL0} we have approximated the objective of \eqref{opt:cov_est_block_MM} with a continuous and differentiable function, the problem still remains too hard to deal with directly since it involves the minimization of a non-convex function over a non-convex set. 

\subsection{Shrinkage}

In the case where the number of samples is less than the dimension of the problem, i.e., when $n<m$, the sample covariance matrix $\vS$ is low rank. As a result, all the covariance estimation problems that were presented are unbounded below.  

We can overcome this problem, for example, by shrinking the sample covariance matrix towards an identity matrix \cite{Abramovich1982Controlled,Ledoit04honey}, i.e.,
\begin{equation}\label{eq:S_sh}
	\vS_{\text{sh}}=(1-\delta)\vS+\delta\vI_m,
\end{equation}
with $0<\delta\leq1$. With this technique we bound the minimum eigenvalue of $\vS_{\text{sh}}$ by $\delta$, the matrix becomes full rank and the optimization problems are now well defined. The effect of shrinkage in the estimation of $\vSigma$ will be shown in Section \ref{sec:num_exper}. 

\subsection{Review of the MM framework}

The minorization-maximization (if we maximize) or majorization-minimization (if we minimize) algorithm is a way to handle optimization problems that are too difficult to face directly \cite{Hunter04TutorialMM}. Consider a general optimization problem
\[
\begin{aligned}
\underset{\vx}{\text{maximize}}& \quad f\left(\vx\right)\\
\text{subject to}&\quad \vx \in \mathcal{X},
\end{aligned}
\label{eq:P}
\]
where $\mathcal{X}$ is a closed set. At a given point $\vx^{(k)}$, the minorization-maximization algorithm finds a surrogate function $g\left( \vx |\vx^{(k)}\right)$ of $f\left(\vx \right)$ satisfying the following properties: 
\begin{itemize}
	\item $f\left(\vx^{(k)} \right) = g\left( \vx^{(k)} |\vx^{(k)}\right)$,\\
	\item $f\left(\vx\right) \geq g\left( \vx |\vx^{(k)}\right),\, \forall \vx \in \mathcal{X}$.
\end{itemize}
Then $\vx$ is iteratively updated (with $k$ denoting iterations) as:
\begin{equation}
	\vx^{(k+1)} = \arg\max_{\vx \in \mathcal{X}} g\left( \vx |\vx^{(k)}\right).
\end{equation}
It can be seen easily that $f\left(\vx^{(k)}\right)\leq f\left(\vx^{(k+1)}\right)$ holds. 

The majorization-minimization algorithm works in an equivalent way, such that in each update  $f\left(\vx^{(k)}\right)\geq f\left(\vx^{(k+1)}\right)$ holds.

In practice, it is not a trivial task to find a surrogate function such that the maximizer of the minorization (or minimizer of the majorization) function of the objective can be found easily or even have a closed-form solution. The following lemma will be useful for the MM algorithms that will be derived throughout this paper:
\begin{lemma}\label{lemma:zero norm upperbound}
	On the set $\big\{\vU\in\bR^{m\times q}\vert\vU^T\vU=\vI_q\big\}$, the function $\sum_{j=1}^{q}\rho_j \sum_{i=1}^{m}g_p^{\epsilon}\left(u_{ij}\right)$ is majorized at $\vU_0$ by $2\Tr\left({\vH}^T\vU\right) + c$, where
	\begin{equation}
		\vH=\left[\diag\left(\vw-\vw_{\max}\otimes\vone_{m}\right)\vect(\vU_0)\right]_{m\times q}
	\end{equation} 
	and 
	\begin{equation}\label{eq:const c}
		c=2\left(\vone_q^T\vw_{\max}\right)-{\vect(\vU_0)}^T\diag\left(\vw\right)\vect(\vU_0).
	\end{equation}
	The weights $\vw\in\bR_+^{mq}$ are given by 
	\begin{equation}\label{eq:weights}
		w_{i}= \begin{cases}
			\frac{\rho_i}{2\epsilon(p+\epsilon)\log(1+1/p)},& |u_{0,i}|\leq\epsilon,\\
			\frac{\rho_i}{2\log(1+1/p)|u_{0,i}|\left(|u_{0,i}|+p\right)},& |u_{0,i}|>\epsilon,
		\end{cases}
	\end{equation}
	where $\vu_0=\vect\left(\vU_0\right)$, and $\vw_{\max}\in\bR^q_+$, with $w_{\max,i}$ being the maximum weight that corresponds to $\vu_{0,i}$.
\end{lemma}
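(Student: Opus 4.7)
The plan is to obtain the stated linear surrogate by composing two majorizations: first a quadratic upper bound of each scalar $\rho_j g_p^{\epsilon}(u_{ij})$ at $u_{0,ij}$, and then a linear upper bound of the resulting quadratic that exploits the Stiefel constraint $\vU^T\vU=\vI_q$. Modulo an additive constant depending only on $\vU_0$ (and hence irrelevant to the MM update), this chain delivers exactly $2\Tr(\vH^T\vU)+c$.

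For the first step, I would work with the auxiliary function $\phi(t):=g_p^{\epsilon}(\sqrt{t})$ rather than with $g_p^{\epsilon}$ directly. On $[0,\epsilon^2]$, $\phi$ is linear; on $(\epsilon^2,\infty)$, a direct differentiation shows $\phi''(t)<0$; and the one-sided derivatives agree at $t=\epsilon^2$, so $\phi'$ is nonincreasing on all of $[0,\infty)$ and $\phi$ is concave there. The tangent inequality $\phi(t)\leq\phi(t_0)+\phi'(t_0)(t-t_0)$, applied with $t=u_{ij}^2$ and $t_0=u_{0,ij}^2$, then yields
\begin{equation*}
\rho_j g_p^{\epsilon}(u_{ij})\leq w_{ij}u_{ij}^2 + \bigl(\rho_j g_p^{\epsilon}(u_{0,ij})-w_{ij}u_{0,ij}^2\bigr),
\end{equation*}
where $w_{ij}=\rho_j\phi'(u_{0,ij}^2)$ reproduces the two-branch formula in \eqref{eq:weights}. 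Summing over $(i,j)$ gives a globally valid quadratic majorizer of the whole sparsity term.

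For the second step, I would use $\sum_i u_{ij}^2=1$ to rewrite, per column, $\sum_i w_{ij}u_{ij}^2 = w_{\max,j}-\vu_j^T\vB_j\vu_j$ with $\vB_j:=\diag(w_{\max,j}\vone_m-\vw_j)\succcurlyeq 0$. Since $-\vu_j^T\vB_j\vu_j$ is concave in $\vu_j$, its first-order tangent at $\vu_{0,j}$ is an upper bound; applying it, using $\|\vu_{0,j}\|^2=1$ once more, and summing over $j$, the $\vU$-quadratic part cancels and leaves the linear term $2\sum_{ij}(w_{ij}-w_{\max,j})u_{0,ij}u_{ij}$, which is exactly $2\Tr(\vH^T\vU)$ once $\vH$ is rewritten through the Kronecker identity $\vw_{\max}\otimes\vone_m$ and the reshape $[\cdot]_{m\times q}$, and the residual constants collect into $c=2(\vone_q^T\vw_{\max})-\vect(\vU_0)^T\diag(\vw)\vect(\vU_0)$. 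I expect the main obstacle to lie in justifying Step 1 globally: because $g_p^{\epsilon}$ is only piecewise smooth, a tangent constructed in one of the two branches need not dominate $g_p^{\epsilon}$ in the other branch. Passing through $\phi$ reduces this to a single concavity check on $[0,\infty)$, whose only delicate point is the $C^1$ gluing of $\phi$ at $t=\epsilon^2$; once this is verified, Step 2 is routine algebra.
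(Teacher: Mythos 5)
Your proof is correct and follows essentially the same two-stage route as the paper's Appendix A: a quadratic majorizer of the sparsity term with exactly the weights \eqref{eq:weights}, followed by the shift by $\vw_{\max}$ (exploiting the unit-norm columns) and linearization of the resulting concave quadratic, which is the paper's computation verbatim, done there in vectorized form with $\vW_d,\vW_m$ rather than column by column. The only difference is that you actually derive the quadratic bound, via concavity and $C^1$ gluing of $t\mapsto g_p^{\epsilon}(\sqrt{t})$, where the paper simply cites \cite{Song15Sparse}, and your version carries the additive constant that makes the surrogate tight at $\vU_0$, which the paper drops as optimization-irrelevant.
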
  
\begin{proof}
	See Appendix \ref{app:proof of zero norm upperbound}.
\end{proof}

\subsection{Procrustes problems}

Consider the following optimization problem:
\begin{equation}
	\begin{aligned}\label{opt:procrustes_trace form}
		\underset{\vX}{\text{maximize}}&\quad \Tr\left(\vY^T\vX\right)\\
		\text{subject to}&\quad \vX^T\vX=\vI_q,
	\end{aligned}
\end{equation}
where $\vX,\vY\in\bR^{m\times q}$. 
Notice that problem \eqref{opt:procrustes_trace form} is equivalent to 
\begin{equation}\label{Procrustes}
	\begin{aligned}
		\underset{\vX}{\text{minimize}}&\quad \|\vX - \vY\|_F^2\\
		\text{subject to}&\quad \vX^T\vX=\vI_q,
	\end{aligned}
\end{equation}
which is a Procrustes problem.

\begin{lemma}\label{lemma:Procrustes}
	For $m=q$ ($m>q$), problem \eqref{opt:procrustes_trace form} can be transformed into an orthogonal (rectangular) Procrustes problem and its optimal solution is $\vX^\star=\vV_L\vV_R^T$, where $\vV_L,\vV_R$ are the left and right singular vectors of the matrix $\vY$, respectively \cite{Schonemann66Generalized},\cite[Proposition 7]{Manton02Optimization}.
\end{lemma}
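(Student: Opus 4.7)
The plan is to first verify the claimed equivalence between the trace maximization \eqref{opt:procrustes_trace form} and the Procrustes form \eqref{Procrustes}, and then solve the latter via the singular value decomposition of $\vY$. For the equivalence, I would simply expand
\[
\|\vX-\vY\|_F^2=\Tr(\vX^T\vX)-2\Tr(\vY^T\vX)+\Tr(\vY^T\vY)=q-2\Tr(\vY^T\vX)+\|\vY\|_F^2,
\]
so that on the feasible set $\vX^T\vX=\vI_q$ minimizing $\|\vX-\vY\|_F^2$ is equivalent to maximizing $\Tr(\vY^T\vX)$. This handles both the square case ($m=q$, orthogonal Procrustes) and the tall case ($m>q$, rectangular Procrustes) uniformly.

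Next I would compute a thin SVD $\vY=\vV_L\vSigma\vV_R^T$ with $\vV_L\in\bR^{m\times q}$ having orthonormal columns, $\vSigma\in\bR^{q\times q}$ diagonal and nonnegative, and $\vV_R\in\bR^{q\times q}$ orthogonal. Using cyclicity of the trace,
\[
\Tr(\vY^T\vX)=\Tr(\vV_R\vSigma\vV_L^T\vX)=\Tr(\vSigma\vZ),\qquad \vZ:=\vV_L^T\vX\vV_R\in\bR^{q\times q}.
\]
The key step is then to control the diagonal entries of $\vZ$. Since $\vV_L\vV_L^T$ is an orthogonal projector onto the column space of $\vV_L$, we have $\vV_L\vV_L^T\preccurlyeq\vI_m$, and therefore
\[
\vZ^T\vZ=\vV_R^T\vX^T\vV_L\vV_L^T\vX\vV_R\preccurlyeq\vV_R^T\vX^T\vX\vV_R=\vI_q.
\]
This implies $\sum_{j}z_{ji}^2\leq 1$ for each column, hence $|z_{ii}|\leq 1$.

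Combining the two facts, since $\sigma_i\geq 0$,
\[
\Tr(\vSigma\vZ)=\sum_{i=1}^{q}\sigma_i z_{ii}\leq\sum_{i=1}^{q}\sigma_i,
\]
with equality if and only if $z_{ii}=1$ for every $i$ whose $\sigma_i>0$. Using $\vZ^T\vZ\preccurlyeq\vI_q$ once more, $z_{ii}=1$ together with the column norm bound forces $z_{ji}=0$ for $j\neq i$, so at the optimum $\vZ=\vI_q$ (for indices with zero singular values the choice is immaterial, and $\vZ=\vI_q$ is always feasible). Inverting the change of variables gives $\vV_L^T\vX\vV_R=\vI_q$, i.e., $\vX^\star=\vV_L\vV_R^T$, which is indeed feasible because $(\vV_L\vV_R^T)^T(\vV_L\vV_R^T)=\vV_R\vV_L^T\vV_L\vV_R^T=\vI_q$. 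The square case $m=q$ is a specialization in which $\vV_L$ is itself orthogonal. The only subtle point is the inequality $\vV_L\vV_L^T\preccurlyeq\vI_m$ in the rectangular case; everything else is a direct manipulation.
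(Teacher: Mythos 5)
Your proof is correct. Note that the paper itself does not prove this lemma at all --- it simply defers to Sch\"onemann and to Proposition 7 of Manton --- so your contribution is a self-contained derivation of what those references establish. The argument you give is the standard SVD route and it is sound: the expansion of $\|\vX-\vY\|_F^2$ on the Stiefel manifold reduces the problem to maximizing $\Tr(\vY^T\vX)$; the change of variables $\vZ=\vV_L^T\vX\vV_R$ together with $\vV_L\vV_L^T\preccurlyeq\vI_m$ gives $\vZ^T\vZ\preccurlyeq\vI_q$, hence $|z_{ii}|\leq1$ and the bound $\Tr(\vSigma\vZ)\leq\sum_i\sigma_i$; and the candidate $\vX=\vV_L\vV_R^T$ is feasible and attains the bound, which is all the lemma asserts. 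Two small remarks. First, your phrase ``inverting the change of variables'' is slightly loose in the rectangular case: $\vZ=\vI_q$ alone does not immediately give $\vX=\vV_L\vV_R^T$, since $\vX$ could a priori have a component in the orthogonal complement of the range of $\vV_L$; one either notes that $\vV_L^T\vX=\vV_R^T$ already has orthonormal rows so the constraint $\vX^T\vX=\vI_q$ kills any such component, or (more simply, and sufficient for the lemma) one skips the inversion entirely and just verifies that the explicit candidate achieves the upper bound. Second, your treatment correctly handles the degenerate case of zero singular values, where the maximizer is not unique but $\vV_L\vV_R^T$ remains optimal --- a point the paper's citation-only statement glosses over.
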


\section{Sparse PCA}
\label{sec:sp_eig_est}

In this section we return to the sparse eigenvector extraction problem as formulated in \eqref{eq:gen_sparse_eig_opt_approximate}. In the following, we apply the MM algorithm and derive a tight lower bound (surrogate function), $g\left(\vU|\vU^{(k)}\right)$, for the objective function of \eqref{eq:gen_sparse_eig_opt_approximate}, denoted by $f\left(\vU\right)$, at the $\left(k+1\right)$-th iteration.

\begin{prop} The function $f\left(\vU\right)$ is lowerbounded by the surrogate function 
	\begin{equation}
		g\left(\vU|\vU^{(k)}\right) = 2\Tr\left(\!\left(\!\vG^{(k)}-\vH^{(k)}\!\right)^T\vU\!\right)+c_1-c_2,
	\end{equation}
	where 
	\begin{equation}\label{eq:G}
		\vG^{(k)} =\vS\vU^{(k)}\vD,
	\end{equation}
	\begin{equation}\label{eq:H}
		\vH^{(k)}\!=\!\left[\diag\left(\vw^{(k)}-\vw_{\max}^{(k)}\otimes\vone_{m}\right)\vu^{(k)}\right]_{m\times q},
	\end{equation}
	and $c_1,c_2$ are optimization irrelevant constants. Equality is achieved when $\vU=\vU^{(k)}$.
\end{prop}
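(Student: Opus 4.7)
The plan is to build the minorizer term by term. The objective splits as $f(\vU) = f_1(\vU) + f_2(\vU)$ where $f_1(\vU) = \Tr(\vU^T\vS\vU\vD)$ and $f_2(\vU) = -\sum_{j=1}^{q}\rho_j\sum_{i=1}^{m}g_p^{\epsilon}(u_{ij})$. Since the sum of two minorizers (each tight at $\vU^{(k)}$) is again a minorizer tight at $\vU^{(k)}$, I would handle the two pieces independently and then add them.

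For $f_1$, the key observation is that $\vS \succcurlyeq 0$ and $\vD \succcurlyeq 0$ together make $f_1(\vU) = \Tr(\vU^T\vS\vU\vD)$ \emph{convex} in $\vU$ (it is a nonnegative sum of convex quadratic forms in the columns $\vu_j$, weighted by $d_{jj}\ge 0$). Any convex function is globally lower-bounded by its first-order Taylor expansion, so I would compute $\nabla_{\vU} f_1(\vU) = 2\vS\vU\vD$ and write
\[
f_1(\vU) \;\geq\; f_1(\vU^{(k)}) + 2\,\Tr\!\bigl((\vS\vU^{(k)}\vD)^T(\vU-\vU^{(k)})\bigr) \;=\; 2\,\Tr\!\bigl(\vG^{(k)T}\vU\bigr) + c_1,
\]
where $\vG^{(k)}$ is given by \eqref{eq:G} and $c_1 = -\Tr((\vU^{(k)})^T\vS\vU^{(k)}\vD)$ is irrelevant to the maximizer. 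Equality at $\vU = \vU^{(k)}$ is automatic from the construction. For $f_2$, I would simply invoke Lemma~\ref{lemma:zero norm upperbound}, whose hypotheses are exactly $\vU^T\vU = \vI_q$, to obtain the upper bound
\[
\sum_{j=1}^{q}\rho_j\sum_{i=1}^{m}g_p^{\epsilon}(u_{ij}) \;\leq\; 2\,\Tr\!\bigl(\vH^{(k)T}\vU\bigr) + c_2,
\]
with $\vH^{(k)}$ as in \eqref{eq:H} and $c_2$ the constant~\eqref{eq:const c} evaluated at $\vU^{(k)}$. Negating flips this into the desired minorizer for $f_2$, and tightness at $\vU^{(k)}$ is part of the lemma's guarantee.

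Adding the two bounds gives exactly $g(\vU|\vU^{(k)}) = 2\,\Tr\!\bigl((\vG^{(k)}-\vH^{(k)})^T\vU\bigr) + c_1 - c_2$, with equality at $\vU = \vU^{(k)}$ inherited from each piece. I do not expect any real obstacle: the only thing one has to notice is that the quadratic trace term is convex (not concave) in $\vU$, so that the first-order Taylor expansion yields a global minorizer rather than a majorizer; with that in hand, there is no Lipschitz-type curvature bound to construct, and the linear form of the surrogate is precisely what the next step needs in order to recognize the inner maximization as a rectangular Procrustes problem solvable by Lemma~\ref{lemma:Procrustes}.
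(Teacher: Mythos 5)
Your proposal is correct and follows essentially the same route as the paper: a first-order Taylor (tangent-plane) minorizer for the convex term $\Tr(\vU^T\vS\vU\vD)$, giving $\vG^{(k)}=\vS\vU^{(k)}\vD$ and the constant $c_1$, combined with the majorizer from Lemma~\ref{lemma:zero norm upperbound} for the penalty term, negated and added. Your added justification that the quadratic trace term is convex because $\vS\succcurlyeq0$ and $\vD\succcurlyeq0$ make it a nonnegative sum of convex quadratics in the columns is a correct filling-in of a step the paper states without elaboration.
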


\begin{proof}
	The first term of the objective is convex so a lower bound can be constructed by its first order Taylor expansion:
	\begin{align}
		\Tr\left(\vU^T\vS\vU\vD\right)\geq 2\Tr\left(\left(\vS\vU^{(k)}\vD\right)^T\vU\right) + c_1,
	\end{align}
	where $c_1 = -\Tr\left({\vU^{(k)}}^T\vS\vU^{(k)}\vD\right)$
	is a constant. 
	
	For the second term, using the results from Lemma \ref{lemma:zero norm upperbound}, it is straightforward to show that it is lowerbounded by the function $-2\Tr\left(\vH^{(k)}\vU\right)-c_2$, where $\vH^{(k)}$ is given by \eqref{eq:H} and $c_2=2\left(\vone_q^T\vw_{\max}\right)-{\vu^{(k)}}^T\diag\left(\vw\right)\vu^{(k)}$
	is a constant.
\end{proof}

Now, we drop the constants and the optimization problem of every MM iteration takes the following form:
\begin{equation}
	\begin{aligned}\label{opt:sp_eig_trace form}
		\underset{\vU}{\text{maximize}}&\quad \Tr\left(\left(\vG^{(k)}-\vH^{(k)}\right)^T\vU\right)\\
		\text{subject to}&\quad \vU^T\vU=\vI_q.
	\end{aligned}
\end{equation}

\begin{prop}
	The optimal solution of the optimization problem \eqref{opt:sp_eig_trace form} is $\vU^\star=\vV_L\vV_R^T$, where $\vV_L\in\bR^{m\times q}$ and $\vV_R\in\bR^{q\times q}$ are the left and right singular vectors of the matrix $\left(\vG^{(k)}-\vH^{(k)}\right)$, respectively.
\end{prop}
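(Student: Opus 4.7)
The plan is straightforward: recognize that problem \eqref{opt:sp_eig_trace form} is an instance of the rectangular Procrustes problem \eqref{opt:procrustes_trace form} already handled by Lemma \ref{lemma:Procrustes}, and then simply invoke that lemma.

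More concretely, I would first observe that the objective $\Tr\bigl((\vG^{(k)}-\vH^{(k)})^T\vU\bigr)$ has exactly the form $\Tr(\vY^T\vX)$ that appears in \eqref{opt:procrustes_trace form}, with the identifications $\vY \leftarrow \vG^{(k)}-\vH^{(k)} \in \bR^{m\times q}$ and $\vX \leftarrow \vU \in \bR^{m\times q}$. The feasible set $\{\vU : \vU^T\vU = \vI_q\}$ is likewise identical to the Stiefel manifold constraint in \eqref{opt:procrustes_trace form}. Since in our setting the number of eigenvectors to extract satisfies $q \leq m$, both the square case ($q=m$, orthogonal Procrustes) and the tall case ($q<m$, rectangular Procrustes) are admissible, and Lemma \ref{lemma:Procrustes} applies in either regime.

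Applying Lemma \ref{lemma:Procrustes} directly then yields $\vU^\star = \vV_L \vV_R^T$, where $\vV_L \in \bR^{m\times q}$ and $\vV_R \in \bR^{q\times q}$ are the matrices of left and right singular vectors of $\vG^{(k)} - \vH^{(k)}$, thereby completing the proof.

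There is essentially no obstacle: the entire content of the proposition is an immediate reduction to Lemma \ref{lemma:Procrustes}. The only minor bookkeeping is to be explicit that we use the \emph{thin} SVD of the $m\times q$ matrix $\vG^{(k)}-\vH^{(k)}$, so that the dimensions of $\vV_L$ and $\vV_R$ match those stated in the proposition and the product $\vV_L \vV_R^T$ indeed lies in the Stiefel manifold $\{\vU \in \bR^{m\times q} : \vU^T\vU=\vI_q\}$.
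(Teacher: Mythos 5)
Your proposal is correct and matches the paper's own argument, which likewise disposes of the proposition by a direct invocation of Lemma \ref{lemma:Procrustes} with $\vY = \vG^{(k)}-\vH^{(k)}$. The extra remark about using the thin SVD so that $\vV_L\vV_R^T$ lands on the Stiefel manifold is a sensible clarification but does not change the substance.
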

\begin{proof}
	The proof comes directly from Lemma \ref{lemma:Procrustes}.
\end{proof}

In Algorithm \ref{alg:Alg1} we summarize the above iterative procedure. We will refer to it as IMRP.

Since the algorithm does not perform any hard thresholding, the resulting eigenvectors do not have zero elements but rather very small values. To this end, we can set to zero all the values that are below a threshold (e.g. $10^{-12}$) and obtain sparse eigenvectors. As it will be shown in the numerical experiments, the affect of this thresholding on the orthogonality of the eigenvectors is negligible.

\begin{algorithm}[t]
	\caption{IMRP - Iterative Minimization of Rectangular Procrustes for the sparse eigenvector problem \eqref{eq:gen_sparse_eig_opt_approximate}}\label{alg:Alg1}
	\begin{algorithmic}[1]
		\State Set $k=0$, choose $\vU^{(0)}\in\{\vU\vert\vU^T\vU=\vI_q\}$
		\State \textbf{repeat}:	
		\State\hspace{\algorithmicindent} Compute $\vG^{(k)},\vH^{(k)}$ with \eqref{eq:G}-\eqref{eq:H} 
		\State\hspace{\algorithmicindent} Compute $\vV_L$, $\vV_R$, the left and right singular vectors
		\Statex\hspace{\algorithmicindent} of $\left(\vG^{(k)}-\vH^{(k)}\right)$, respectively
		\State\hspace{\algorithmicindent} $\vU^{(k+1)} = \vV_L\vV_R^T$
		\State\hspace{\algorithmicindent} $k \gets k+1$
		\State \textbf{until} convergence
		\State \textbf{return} $\vU^{(k)}$
	\end{algorithmic}
\end{algorithm}

\subsection{Explained Variance}

In the ordinary PCA the principal components are uncorrelated while the corresponding loadings are orthogonal. If we denote by $\vY$ the ordinary principal components, the total explained variance can be calculated as $\Tr\left(\vY^T\vY\right)$. If the principal components are correlated though, computing the total variance this way will overestimate the true explained variance. 

An approach to overcome this issue was first suggested in \cite{Zou06SparsePCA} (and adopted in \cite{Journee10GeneralizedSPCA}), where the authors introduced the notion of adjusted variance. The idea is to remove the correlations of the principal components sequentially. This can be done efficiently by the QR decomposition: if $\vA\in\bR^{n\times m}$ is a data matrix and $\vU\in\bR^{m\times q}$ are the $q$ estimated loadings, then the adjusted variance is simply
\begin{equation}
	\text{AdjVar}\left(\vU\right)=\Tr\left(\vR^2\right),
\end{equation}
where $\vA\vU=\vQ\vR$, is the QR decomposition of $\vA\vU$. The explained variance percentage can be then computed as AdjVar$(\vU)/$AdjVar$(\vU_{\text{PCA}})$, where $\vU_{\text{PCA}}$ are the first $q$ eigenvectors of $\vA^T\vA$. 

As mentioned in \cite{Shen08Sparse}, in the above approach the lack of orthogonality in the loadings is not addressed. Thus, a new approach was proposed: when the loading vectors are not orthogonal we should not consider separate projections of the data matrix onto each of them. Instead, we should project the data matrix onto the $q$-dimensional subspace, i.e., $\vA_q=\vA\vU\left(\vU^T\vU\right)^{-1}\vU^T$. Then, the total variance is simply $\Tr\left(\vA_q^T\vA_q\right)$ and the cumulative percentage of explained variance (CPEV) can be computed as
\begin{equation}\label{eq:CPEV}
	\text{CPEV}=\Tr\left(\vA_q^T\vA_q\right)/\Tr\left(\vA^T\vA\right).
\end{equation}

In this paper we adopt the second approach and compute the explained variance using \eqref{eq:CPEV}. 

\begin{table*}[t!]
	\centering
	\begin{tabular}{|l|c|c|c|c|}
		\hline
		& Case 1 & \multicolumn{2}{c|}{Case 2} & Case 3  \\ 
		\hline
		\multirow{4}{*}{Conditions} &  & \multicolumn{2}{c|}{$z_{j-1}^{(k)}> z_{j}^{(k)}\quad\text{if }j>1$ } & $z_{q-r-1}^{(k)}> z_{q-r}^{(k)}$\\
		& $z_i^{(k)}\!\!\geq\!\! z_{i+1}^{(k)},\!\!\!\!\! \quad i\!\in\![1\!:\!q\!-\!1]$ & \multicolumn{2}{c|}{$z_i^{(k)}\leq z_{i+1}^{(k)}, \quad i\in[j:j+k-1]$}  & $z_{i}^{(k)}\leq z_{i+1}^{(k)}, \quad i\in[q-r:q-1]$  \\ \cline{3-4}
		& $z_q^{(k)}\!\!\geq\!\! z_{q+i}^{(k)},\!\!\!\!\! \quad i\!\in\![1\!:\!m\!-\!q]$ & $\text{if }j+k<q$ &$\text{if }j+k=q$  & $z_{q}^{(k)}\leq z_{c_i}^{(k)}, \quad i\in[1:k]$\\ 
		&    & $z_{j+k}^{(k)}\!>\!z_{j+k+1}^{(k)}$ & $\!z_q^{(k)}\!\!>\!\! z_{q+i}^{(k)},\!\!\!\!\! \quad i\!\in\![1\!:\!m\!-\!q]\!$  &  $z_{q}^{(k)}> z_{q+i}^{(k)}, \quad i\in[1:m-q]\setminus\cC$\\
		\hline
		\multirow{2}{*}{$\!$Block Updates$\!\!$}     & \multirow{2}{*}{-} & \multicolumn{2}{c|}{\multirow{2}{*}{$z_i^{(k+1)}=\frac{1}{k+1}\sum_{i=0}^{k}\limits z_{j+i}^{(k)},\!\!\! \quad i\in[j:j+k]$}} & $z_i^{(k+1)}=z_i^{(k)},\quad i\!\in\!\cC\setminus\cA$\\
		& & \multicolumn{2}{c|}{} & $\!z_i^{(k+1)}\!\!=\!\!\frac{1}{r+p+1}\!\sum_{i=0}^{r}\limits\!z_{q-i}^{(k)}\!+\!\sum_{i=1}^{p}\limits\!z_{a_i}^{(k)},\!\!\quad i\!\in\![q\!-\!r\!:\!q]\bigcup\cA\!$\\
		\hline
		Solution  & $\vlambda^\star=\frac{1}{\vz^{(k)}}$ & \multicolumn{2}{c|}{-} & -\\
		\hline
	\end{tabular}
	\caption{Updates and optimal solution of the iterative procedure that solves the optimization problem \eqref{opt:lambda_block_MM}.}
	\label{tab:lambda_block_MM solution}
\end{table*}

\section{Sparse Eigenvectors in Covariance Estimation}
\label{sec:cov_est}

In this section we return to the problem of covariance estimation with sparse eigenvectors. We consider the formulation \eqref{opt:cov_est_block_MM_approxL0}, i.e.,  

\begin{equation*}
	\begin{aligned}
		\underset{\vU,\vLambda}{\text{minimize}}&\quad-\log\det\left(\vLambda\right)+\Tr\left(\vS\vU\vLambda\vU^T\right)\\
		&\quad +\sum_{j=1}^{m}\rho_j \sum_{i=1}^{m}g_p^{\epsilon}\left(u_{ij}\right)\\
		\text{subject to}&\quad \vLambda\succcurlyeq0,\\
		&\quad \lambda_{i}\leq \lambda_{i+1}, \quad i=1,\dots, q-1,\\
		&\quad \lambda_{q}\leq \lambda_{q+i}, \quad i=1,\dots,m-q,\\
		&\quad \vU^T\vU=\vI.
	\end{aligned}
\end{equation*}

To deal with this problem, we propose two methods based on the MM framework. In Section \ref{subsec:alt_opt} we perform alternating optimization of the eigenvalues and eigenvectors while in Section \ref{subsec:joint_opt} we estimate them jointly.

\subsection{Alternating Optimization Using the MM Framework}
\label{subsec:alt_opt}

We begin with the optimization problem \eqref{opt:cov_est_block_MM_approxL0} which is highly non-convex. We tackle it by alternating optimization of $\vU$ and $\vLambda$.

For fixed $\vU$ the optimization problem over $\vlambda$ can be written in the following convex form:
\begin{equation}
	\begin{aligned}\label{opt:lambda_block_MM}
		\underset{\vec{\lambda}}{\text{minimize}}&\quad-\sum_{i=1}^{m}\log\lambda_i+\sum_{i=1}^{m}z_i\lambda_i\\
		\text{subject to}&\quad \lambda_{i}\leq \lambda_{i+1}, \quad i=1,\dots,q-1,\\
		&\quad \lambda_{q}\leq \lambda_{q+i}, \quad i=1,\dots,m-q,
	\end{aligned}
\end{equation}
where we have dropped the positive semidefinite constraint of $\vLambda$ since it is implicit form the $\log$ function, and $\vz=\Diag\left(\vU^T\vS\vU\right)\geq\vzr$, since $\vS\succcurlyeq\vzr$.

The optimization problem \eqref{opt:lambda_block_MM} does not have a closed-form solution. Nevertheless, we can find an iterative closed-form update of the parameter $\vz$ that will allow us to obtain the optimal solution for $\vlambda$.

We start from the corresponding unconstrained version of problem \eqref{opt:lambda_block_MM} whose solution is 
\begin{equation}
	\vlambda=\frac{1}{\vz^{(0)}}, 
\end{equation}
where $\vz^{(0)}=\vz$. If the solution is feasible then it is the optimal one. Else, we need to update $\vz$. In every iteration, all the non-overlapping blocks of $z_i$'s that satisfy certain conditions need to be updated in parallel. In the $k$-th iteration we distinguish three different cases:\\ 
\textit{Case 1:} $\vlambda=1/\vz^{(k)}$ satisfies all the constraints of problem \eqref{opt:lambda_block_MM}. Then the optimal solution is $\vlambda^\star=\vlambda$.\\
\textit{Case 2:} $\vlambda=1/\vz^{(k)}$ violates $r\geq1$ consecutive ordering constraints of the first $q$ eigenvalues. For any such block violation we need to update $\vz^{(k)}$.\\  
\textit{Case 3:} $\vlambda=1/\vz^{(k)}$ violates $r+l\geq1$ consecutive ordering constraints, with $r\geq0$ and $l\geq1$, including the last $r+1$ ordered and a set of $l$ unordered eigenvalues. Since we do not impose ordering on the $m-q$ last eigenvalues, any of them could violate the inequality with $\lambda_q$ and not only the neighboring ones. Thus, we use the indices $c_1,\dots,c_l$, with $c_i>q$, for $i=1,\dots,l$, and $c_i\in\cC$, with $\cC$ the set of indices of the eigenvalues that violate the inequality constraints with $\lambda_q$. We further denote by $\cA\subseteq\cC$, with $\card(\cA)=p<l$, the set of indices given by
\begin{equation}
	\cA\!=\!\Bigg\{\!c_i\Bigg\vert z_{c_i}^{(k)}\!\geq\!\frac{1}{r\!+\!l\!-\!i\!+\!1}\!\left(\sum_{s=0}^{r}z_{q-s}^{(k)}\!+\!\!\sum_{s=0}^{l-i-1}\!\!z_{c_{l-s}}^{(k)}\!\right)\!\!\Bigg\}.
\end{equation}
For any such block violation we need to update $\vz^{(k)}$.

\begin{prop}\label{prop:lambda_block_MM}
	The iterative-closed form update procedure given in Table \ref{tab:lambda_block_MM solution} converges to the solution of problem \eqref{opt:lambda_block_MM}.
\end{prop}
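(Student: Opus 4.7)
The plan is to exploit the fact that problem \eqref{opt:lambda_block_MM} is a strictly convex separable program with linear constraints, so KKT conditions are both necessary and sufficient for global optimality. I would then show that each block update in Table \ref{tab:lambda_block_MM solution} is exactly the closed-form minimizer of \eqref{opt:lambda_block_MM} restricted to a specific active-set pattern, and that the procedure terminates in finitely many iterations at a point satisfying KKT.

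First, I would write the Lagrangian of \eqref{opt:lambda_block_MM} using multipliers $\mu_i\geq 0$ for the ordering constraints $\lambda_i-\lambda_{i+1}\leq 0$ with $i\in[1{:}q-1]$ and $\nu_i\geq 0$ for the tail constraints $\lambda_q-\lambda_{q+i}\leq 0$ with $i\in[1{:}m-q]$. The stationarity condition yields $1/\lambda_i = z_i + \mu_i - \mu_{i-1}$ for $i<q$, $1/\lambda_q = z_q + \sum_i \nu_i - \mu_{q-1}$, and $1/\lambda_{q+i} = z_{q+i} - \nu_i$ for the tail, together with complementary slackness. From this I would establish the key structural fact: on any maximal active block of indices $\mathcal{B}$ (i.e., indices tied by active inequalities at the optimum), the $\lambda_i$ are all equal to the common value $|\mathcal{B}|/\sum_{i\in\mathcal{B}} z_i$, since summing stationarity over $\mathcal{B}$ telescopes the multipliers belonging to the internal active constraints.

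Next, I would verify the three cases in turn. Case 1 is immediate: if $\vlambda=1/\vz^{(k)}$ already satisfies every constraint, then the tuple $(\vlambda,\vzr,\vzr)$ is KKT, hence optimal by convexity. For Case 2, each consecutive block $[j{:}j+k]$ of violations in the ordered portion corresponds to binding $\lambda_j=\lambda_{j+1}=\cdots=\lambda_{j+k}$ at the optimum; the structural fact above then gives the common value $(k+1)/\sum_{i=j}^{j+k} z_i^{(k)}$, which is equivalent to replacing each $z_i^{(k)}$ in the block by the arithmetic mean, exactly as in the table. For Case 3, the same averaging principle applies to the tail block $[q{-}r{:}q]$ merged with the subset $\mathcal{A}\subseteq\mathcal{C}$ of tail indices that, after merging, are still not larger than the new $\lambda_q$; the threshold defining $\mathcal{A}$ is precisely the condition that $z_{c_i}$ is too large to belong to the merged block, ensuring $\nu_i=0$ for $c_i\notin\mathcal{A}$ and $\nu_i\geq 0$ for $c_i\in\mathcal{A}$, so the KKT conditions are satisfied.

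Finally, for convergence, I would observe that the procedure is a pool-adjacent-violators-type scheme: each iteration either enters Case 1 and terminates, or strictly merges at least two previously distinct blocks into one. Since there are at most $m$ distinct blocks, the algorithm terminates in at most $m$ iterations at a feasible $\vlambda^\star=1/\vz^{(k)}$ satisfying KKT, which by convexity is the global minimizer of \eqref{opt:lambda_block_MM}. The main obstacle will be Case 3: justifying that the definition of $\mathcal{A}$ based on the stated threshold captures \emph{exactly} the subset of $\mathcal{C}$ that binds at optimality. This requires showing that including any $c_i\notin\mathcal{A}$ in the merged block would produce a block mean smaller than $z_{c_i}$ (violating the desired $\lambda_q\leq\lambda_{c_i}$ on the post-merge level), while excluding any $c_i\in\mathcal{A}$ would leave the constraint $\lambda_q\leq\lambda_{c_i}$ violated. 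This monotone-threshold argument, together with a careful ordering of the $c_i$'s by $z_{c_i}$, is the crux of the proof.
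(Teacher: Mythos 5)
Your plan follows essentially the same route as the paper's proof: write the KKT system, show that violated ordering constraints must be active at the optimum so that the tied block's stationarity equations telescope to a single equation with the averaged $z$'s, characterize the active tail subset $\cA$ by the stated threshold, and conclude finite termination because each non-trivial iteration merges blocks (the paper phrases this as removing at least one primal and one dual variable per iteration, giving at most $m-1$ steps). The crux you flag for Case 3 is exactly the point the paper proves by contradiction via the sign of $\mu_{q-r-1}$, so your proposal is correct and not a genuinely different argument.
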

\begin{proof}
	See Appendix \ref{app:proof of algorithm block MM}.
\end{proof}

Now, for fixed $\vLambda$ the problem over $\vU$ becomes:
\begin{equation}
	\begin{aligned}\label{opt:blockMM_optU}
		\underset{\vU}{\text{minimize}}&\quad \Tr\left(\vS\vU\vec{\Lambda}\vU^T\right)+\sum_{j=1}^{q}\rho_j \sum_{i=1}^{m}g_p^{\epsilon}\left(u_{ij}\right)\\
		\text{subject to}&\quad \vU^T\vU=\vI.
	\end{aligned}
\end{equation}

For the second term we can use the same bound as the one for problem \eqref{eq:gen_sparse_eig_opt_approximate}. However, we cannot linearize the first term as previously since the linear approximation is a lower and not an upper bound of a convex function. 

To minimize the objective function we apply the MM algorithm and derive a tight upper bound, $g_{\text{alt}}\left(\vU|\vU^{(k)}\right)$, for the objective function of \eqref{opt:blockMM_optU}, denoted by $f_{\text{alt}}\left(\vU\right)$, at the $\left(k+1\right)$-th iteration.

\begin{prop} The function $f_{\text{alt}}\left(\vU\right)$ is upper bounded by the surrogate function 
	\begin{equation}
		g_{alt}\!\left(\!\vU|\vU^{(k)}\!\right)\!=\! 2\Tr\left(\!\left(\!\vG_{\text{alt}}^{(k)}\!+\!\vH^{(k)}\!\right)^T\!\!\vU\!\right)+c_3+c_4,
	\end{equation}
	where 
	\begin{equation}\label{eq:G_alt}
		\vG_{\text{alt}}^{(k)}=\left[\left(\vLambda\otimes\left(\vS-\lambda_{\max}^{(\vS)}\vI_m\right)\right)\vu^{(k)}\right]_{m\times m},
	\end{equation} 
	\begin{equation}\label{eq:H_cov}
		\vH^{(k)}=\left[\diag\left(\vw^{(k)}-\vw_{\max}^{(k)}\otimes\vone_{m}\right)\vu^{(k)}\right]_{m\times m},
	\end{equation}
	and  $c_3,c_4$ are optimization irrelevant constants.
	Equality is achieved when $\vU=\vU^{(k)}$.
\end{prop}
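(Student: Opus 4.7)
The plan is to separate $f_{\text{alt}}(\vU)$ into the quadratic trace term $\Tr(\vS\vU\vLambda\vU^T)$ and the sparsity sum, construct a tight upper bound for each at $\vU^{(k)}$, and combine them. The sparsity piece is immediate: extending the sum from $j=1,\dots,q$ to $j=1,\dots,m$ by setting $\rho_{q+1}=\cdots=\rho_m=0$ puts it in the exact form handled by Lemma~\ref{lemma:zero norm upperbound}, which on the Stiefel manifold yields the majorizer $2\Tr\bigl((\vH^{(k)})^T\vU\bigr)+c_4$ with $\vH^{(k)}$ as in \eqref{eq:H_cov} and $c_4$ the constant \eqref{eq:const c}, with equality at $\vU^{(k)}$.

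For the quadratic piece, the key step is to vectorize: writing $\vu=\vect(\vU)$, one has
\begin{equation*}
\Tr\bigl(\vS\vU\vLambda\vU^T\bigr)=\vu^T(\vLambda\otimes\vS)\vu.
\end{equation*}
Since $\lambda_{\max}^{(\vS)}\vI_m-\vS\succcurlyeq 0$ and $\vLambda\succcurlyeq 0$, the Kronecker product preserves positive semidefiniteness, giving
\begin{equation*}
\vLambda\otimes\bigl(\lambda_{\max}^{(\vS)}\vI_m\bigr)\;\succcurlyeq\;\vLambda\otimes\vS.
\end{equation*}
I then invoke the standard quadratic majorizer for PSD-ordered matrices: if $\vM\succcurlyeq\vM_0$ then $\vu^T\vM_0\vu\le\vu^T\vM\vu+2\vu^T(\vM_0-\vM)\vu^{(k)}+(\vu^{(k)})^T(\vM-\vM_0)\vu^{(k)}$, with equality at $\vu=\vu^{(k)}$. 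Applying this with $\vM_0=\vLambda\otimes\vS$ and $\vM=\vLambda\otimes(\lambda_{\max}^{(\vS)}\vI_m)$ produces a linear term whose coefficient, after undoing the vectorization via $\vu^T\vb=\Tr(\vU^T[\vb]_{m\times m})$, is exactly $2\Tr\bigl((\vG_{\text{alt}}^{(k)})^T\vU\bigr)$ with $\vG_{\text{alt}}^{(k)}$ as defined in \eqref{eq:G_alt}.

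The remaining simplification uses the Stiefel constraint to absorb the $\vu^T\vM\vu$ term into a constant:
\begin{equation*}
\vu^T\bigl(\vLambda\otimes\lambda_{\max}^{(\vS)}\vI_m\bigr)\vu=\lambda_{\max}^{(\vS)}\Tr(\vU^T\vU\vLambda)=\lambda_{\max}^{(\vS)}\Tr(\vLambda),
\end{equation*}
which, together with the term $(\vu^{(k)})^T(\vM-\vM_0)\vu^{(k)}=\lambda_{\max}^{(\vS)}\Tr(\vLambda)-\Tr\bigl((\vU^{(k)})^T\vS\vU^{(k)}\vLambda\bigr)$, forms the constant $c_3$. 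Summing the two majorizers gives $g_{\text{alt}}(\vU|\vU^{(k)})$, and tightness at $\vU^{(k)}$ follows because each piece is tight there.

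I do not expect a real obstacle: the only subtle points are recognizing that the natural dominating matrix is $\vLambda\otimes(\lambda_{\max}^{(\vS)}\vI_m)$ rather than $\lambda_{\max}^{(\vLambda\otimes\vS)}\vI_{m^2}$ (which would lose the structure needed for the Stiefel simplification), and carefully unwinding the Kronecker/vectorization to recover the matrix form $\vG_{\text{alt}}^{(k)}$ displayed in \eqref{eq:G_alt}.
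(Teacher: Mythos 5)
Your proof is correct and follows essentially the same route as the paper: the sparsity term is handled by Lemma~\ref{lemma:zero norm upperbound}, and the quadratic term is vectorized as $\vu^T(\vLambda\otimes\vS)\vu$ and majorized by splitting off the negative semidefinite part $\vLambda\otimes(\vS-\lambda_{\max}^{(\vS)}\vI_m)$ and linearizing it, with the dominating term $\vu^T(\vLambda\otimes\lambda_{\max}^{(\vS)}\vI_m)\vu$ collapsing to $\lambda_{\max}^{(\vS)}\Tr(\vLambda)$ on the Stiefel manifold. Your ``standard quadratic majorizer for PSD-ordered matrices'' is algebraically identical to the paper's ``in a similar manner as in the proof of Lemma~\ref{lemma:zero norm upperbound}'' step, and your constants reproduce the paper's $c_3$ and $c_4$ exactly.
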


\begin{proof}	
	For the first term of the objective it holds that
	\begin{equation}
		\Tr(\vS\vU\vec{\Lambda}\vU^T)=\vu^T(\vec{\Lambda}\otimes\vS)\vu,
	\end{equation}
	where $\vu=\vect(\vU)$. In a similar manner as in the proof of Lemma \ref{lemma:zero norm upperbound}, it is easy to show that the following holds: 
	\begin{equation}
		\vu^T(\vec{\Lambda}\otimes\vS)\vu\leq 2\Tr\left({\vG_{\text{alt}}^{(k)}}^T\vU\right) + c_3,
	\end{equation}  
	where
	$\vG_{\text{alt}}^{(k)}=\left[\left(\vLambda\otimes\left(\vS-\lambda_{\max}^{(\vS)}\vI_m\right)\right)\vu^{(k)}\right]_{m\times m}$
	and 
	$c_3=2\lambda_{\max}^{(\vS)}\vone^T\vlambda-{\vu^{(k)}}^T\!\left(\vLambda\otimes\vS\right)\vu^{(k)}$
	is a constant. 
	
	For the second term it is straightforward from Lemma \ref{lemma:zero norm upperbound} that an upper bound is the function $2\Tr\left({\vH^{(k)}}^T\vU\right)+c_4$, with
	$\vH^{(k)}=\left[\diag\left(\vw^{(k)}-\vw_{\max}^{(k)}\otimes\vone_{m}\right)\vu^{(k)}\right]_{m\times m}$ and
	$c_4\!=\!\vone_m^T\vw_{\max}^{(k)}-{\vu^{(k)}}^T\!\!\diag\!\left(\!\vw^{(k)}\!-\!\vw_{\max}^{(k)}\otimes\vone_{m}\!\right)\vu^{(k)}$
	a constant.
\end{proof}

Now, we drop the constants and the optimization problem of every MM iteration takes the following form:
\begin{equation}
	\begin{aligned}\label{opt:procrustes_trace form_blockMM}
		\underset{\vU}{\text{minimize}}&\quad \Tr\left(\left(\vG_{\text{alt}}^{(k)}+\vH^{(k)}\right)^T\vU\right)\\
		\text{subject to}&\quad \vU^T\vU=\vI_m.
	\end{aligned}
\end{equation}

\begin{prop}
	The optimal solution of the optimization problem \eqref{opt:procrustes_trace form_blockMM} is $\vU^\star=\vV_L\vV_R^T$, where $\vV_L\in\bR^{m\times m}$ and $\vV_R\in\bR^{m\times m}$ are the left and right singular vectors of the matrix $-\left(\vG_{\text{alt}}^{(k)}+\vH^{(k)}\right)$, respectively.
\end{prop}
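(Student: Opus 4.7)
The plan is to recognize that the minimization problem \eqref{opt:procrustes_trace form_blockMM} is, up to a sign, exactly the trace-form Procrustes problem \eqref{opt:procrustes_trace form} already solved in Lemma \ref{lemma:Procrustes}. Since the feasible set $\{\vU\in\bR^{m\times m}\mid\vU^T\vU=\vI_m\}$ is invariant under negation of the linear coefficient, I would first rewrite
\[
\min_{\vU^T\vU=\vI_m}\Tr\!\left(\!\left(\vG_{\text{alt}}^{(k)}+\vH^{(k)}\right)^{\!T}\!\vU\right)
\;=\;
-\!\max_{\vU^T\vU=\vI_m}\Tr\!\left(\!\left(-\!\left(\vG_{\text{alt}}^{(k)}+\vH^{(k)}\right)\right)^{\!T}\!\vU\right),
\]
which is a direct reformulation using linearity of the trace.

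Next, I would identify $\vY \coloneqq -\!\left(\vG_{\text{alt}}^{(k)}+\vH^{(k)}\right)\in\bR^{m\times m}$ and invoke Lemma \ref{lemma:Procrustes} with $q=m$. The lemma asserts that the maximizer of $\Tr(\vY^T\vU)$ over the Stiefel manifold is $\vU^\star=\vV_L\vV_R^T$, where $\vY=\vV_L\vSigma\vV_R^T$ is the SVD of $\vY$. Substituting back yields exactly the claimed formula, where $\vV_L$ and $\vV_R$ are the left and right singular vectors of $-\!\left(\vG_{\text{alt}}^{(k)}+\vH^{(k)}\right)$.

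There is essentially no obstacle: the statement is just the minimization analogue of the Procrustes lemma, and the sign flip is the only manipulation needed. The one point worth mentioning explicitly is that this is the \emph{orthogonal} (square) Procrustes case ($m=q$), so the factors $\vV_L,\vV_R$ are both $m\times m$ and $\vV_L\vV_R^T$ is automatically orthogonal, confirming feasibility of the proposed $\vU^\star$.
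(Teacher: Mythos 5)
Your proposal is correct and is essentially the paper's own argument: the paper's proof is a one-line appeal to Lemma \ref{lemma:Procrustes}, and your sign-flip reduction to maximizing $\Tr\left(\left(-\left(\vG_{\text{alt}}^{(k)}+\vH^{(k)}\right)\right)^T\vU\right)$ is exactly the intended reading, merely spelled out. No further changes are needed.
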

\begin{proof}
	The proof comes directly from Lemma \ref{lemma:Procrustes}.
\end{proof}
In Algorithm \ref{alg:Alg2} we summarize the above iterative procedure. We will refer to it as AOCE.

\begin{algorithm}[t]
	\caption{AOCE - Alternating Optimization for Covariance Estimation for the problem \eqref{opt:cov_est_block_MM_approxL0}}\label{alg:Alg2}
	\begin{algorithmic}[1]
		\State Set $k=0$, choose $\vU^{(0)}\in\{\vU\vert\vU^T\vU=\vI_q\}$
		\State \textbf{repeat}:
		\State\hspace{\algorithmicindent} Compute $\vlambda^{(k+1)}$ from Proposition \ref{prop:lambda_block_MM}		
		\State\hspace{\algorithmicindent} Compute $\vG_{\text{alt}}^{(k)},\vH^{(k)}$ with \eqref{eq:G_alt}-\eqref{eq:H_cov}
		\State\hspace{\algorithmicindent} Compute $\vV_L$, $\vV_R$, the left and right singular vectors 
		\Statex\hspace{\algorithmicindent} of $-\left(\vG_{\text{alt}}^{(k)}+\vH^{(k)}\right)$, respectively
		\State\hspace{\algorithmicindent} $\vU^{(k+1)} = \vV_L\vV_R^T$
		\State\hspace{\algorithmicindent} $k \gets k+1$
		\State \textbf{until} convergence
		\State \textbf{return} $\vU^{(k)},\vlambda^{(k)}$
	\end{algorithmic}
\end{algorithm}

\subsection{Joint Optimization Using the MM Framework}
\label{subsec:joint_opt}

Let us consider again the formulation \eqref{opt:cov_est_block_MM_approxL0} with the variable transformation $\vXi=\vLambda^{-1}$. The optimization problem becomes: 
\begin{equation}
\begin{aligned}\label{opt:cov_est_joint_MM_approxL0}
\underset{\vU,\vXi}{\text{minimize}}&\quad\log\det\left(\vXi\right)+\Tr\left(\vS\vU\vXi^{-1}\vU^T\right)\\
&\quad +\sum_{j=1}^{m}\rho_j \sum_{i=1}^{m}g_p^{\epsilon}\left(u_{ij}\right)\\
\text{subject to}&\quad \vXi\succcurlyeq0,\\
&\quad \xi_{i}\geq \xi_{i+1}, \quad i=1,\dots, q-1,\\
&\quad \xi_{q}\geq \xi_{q+i}, \quad i=1,\dots,m-q,\\
&\quad \vU^T\vU=\vI.
\end{aligned}
\end{equation}
Here $\vU,\vXi\in\bR^{m\times m}$, with $\vXi=\diag(\vxi)\succcurlyeq0$. 

Now, we derive a tight upper bound, $g_{\text{jnt}}\left(\vU,\vXi|\vU^{(k)},\vXi^{(k)}\right)$, for the objective function of \eqref{opt:cov_est_joint_MM_approxL0}, denoted by $f_{\text{jnt}}\left(\vU,\vXi\right)$, at the $\left(k+1\right)$-th iteration.

\begin{table*}[t!]
	\centering
	\begin{tabular}{|l|c|c|c|c|}
		\hline
		& Case 1 & \multicolumn{2}{c|}{Case 2} & Case 3  \\ 
		\hline
		\multirow{4}{*}{Conditions} &  & \multicolumn{2}{c|}{$\alpha_{j-1}^{(k)}< \alpha_{j}^{(k)}\quad\text{if }j>1$} & $\alpha_{q-r-1}^{(k)}< \alpha_{q-r}^{(k)}$\\
		& $\alpha_i^{(k)}\!\!\leq\!\! \alpha_{i+1}^{(k)},\!\!\!\!\! \quad i\!\in\![1\!:\!q\!-\!1]$ & \multicolumn{2}{c|}{$\alpha_i^{(k)}\geq \alpha_{i+1}^{(k)}, \quad i\in[j:j+k-1]$}  & $\alpha_{i}^{(k)}\geq \alpha_{i+1}^{(k)}, \quad i\in[q-r:q-1]$  \\ \cline{3-4}
		& $\alpha_q^{(k)}\!\!\leq\!\! \alpha_{q+i}^{(k)},\!\!\!\!\! \quad i\!\in\![1\!:\!m\!-\!q]$ & $\text{if }j+k<q$ &$\text{if }j+k=q$  & $\alpha_{q}^{(k)}\geq z_{c_i}^{(k)}, \quad i\in[1:k]$\\ 
		&    & $\alpha_{j+k}^{(k)}\!<\!z_{j+k+1}^{(k)}$ & $\!\alpha_q^{(k)}\!\!<\!\! \alpha_{q+i}^{(k)},\!\!\!\!\! \quad i\!\in\![1\!:\!m\!-\!q]\!$  &  $\alpha_{q}^{(k)}< \alpha_{q+i}^{(k)}, \quad i\in[1:m-q]\setminus\cC$\\
		\hline
		\multirow{2}{*}{$\!$Block Updates$\!\!$}     & \multirow{2}{*}{-} & \multicolumn{2}{c|}{\multirow{2}{*}{$\alpha_i^{(k+1)}=\frac{1}{k+1}\sum_{i=0}^{k}\limits\alpha_{j+i}^{(k)},\!\!\! \quad i\in[j:j+k]$}} & $\alpha_i^{(k+1)}=\alpha_i^{(k)},\quad i\!\in\!\cC\setminus\cA$\\
		& & \multicolumn{2}{c|}{} & $\!\!\alpha_i^{(k+1)}\!\!=\!\!\frac{1}{r+p+1}\!\sum_{i=0}^{r}\limits\!\alpha_{q-i}^{(k)}\!+\!\sum_{i=1}^{p}\limits\!\alpha_{a_i}^{(k)},\!\!\quad i\!\in\![q\!-\!r\!:\!q]\bigcup\cA\!\!$\\
		\hline
		Solution  & $\!\!\vphi^\star\!=\!\frac{1+\sqrt{1+4\lambda_{\max}^{(\vS)}\vec{\alpha}^{(k)}}}{2\lambda_{\max}^{(\vS)}}$ & \multicolumn{2}{c|}{-} & -\\
		\hline
	\end{tabular}
	\caption{Updates and optimal solution of the iterative procedure that solves the optimization problem \eqref{opt:phi_joint_MM}.}
	\label{tab:lambda_joint_MM solution}
\end{table*}

\begin{algorithm}[t]
	\caption{JOCE - Joint Optimization for Covariance Estimation for the problem \eqref{opt:cov_est_joint_MM_approxL0}}\label{alg:Alg3}
	\begin{algorithmic}[1]
		\State Set $k=0$, choose $\vU^{(0)}\in\{\vU\vert\vU^T\vU=\vI_q\}$
		\State \textbf{repeat}:
		\State\hspace{\algorithmicindent} Compute $\vphi^{(k+1)}$ from Proposition \ref{prop:lambda_joint_MM}		
		\State\hspace{\algorithmicindent} Compute $\vH_{\text{jnt}}^{(k)}$ with \eqref{eq:H_jnt}
		\State\hspace{\algorithmicindent} Compute $\vV_L$, $\vV_R$, the left and right singular vectors 
		\Statex\hspace{\algorithmicindent} of $-\vH_{\text{jnt}}^{(k)}$, respectively
		\State\hspace{\algorithmicindent} $\vU^{(k+1)} = \vV_L\vV_R^T$
		\State\hspace{\algorithmicindent} $k \gets k+1$
		\State \textbf{until} convergence
		\State Set $\vxi=\frac{1}{\vphi^{(k)}}$
		\State \textbf{return} $\vU^{(k)},\vxi$
	\end{algorithmic}
\end{algorithm}

\begin{prop} The function $f_{\text{jnt}}\left(\vU,\vXi\right)$ is upper bounded by the surrogate function 
	\begin{equation}
		g_{\text{jnt}}\!\left(\!\vU,\vXi|\vU^{(k)},\vXi^{(k)}\right)= g_{\xi}(\vXi) + g_u(\vU) + c_6,
	\end{equation}
	where 
	\begin{equation}
		g_{\xi}(\vXi)\!=\!\log\det\!\left(\vXi\right)\!+\!\Tr\!\left(\!\vG_{\text{jnt}}^{(k)}\vXi\!\right)\! +\! \lambda_{\max}^{(\vS)}\Tr\left(\vXi^{-1}\right),
	\end{equation}
	with
	\begin{equation}\label{eq:G_jnt}
		\vG_{\text{jnt}}^{(k)}=-\left(\vXi^{(k)}\right)^{-1}\!{\vU^{(k)}}^T\!\!\left(\vS-\lambda_{\max}^{(\vS)}\vI_m\right)\vU^{(k)}\!\left(\vXi^{(k)}\right)^{-1}
	\end{equation}
	and 
	\begin{equation}
		g_{u}(\vU)=2Tr\left({\vH_{\text{jnt}}^{(k)}}^T\vU\right),
	\end{equation}
	with
	\begin{equation}\label{eq:H_jnt}
		\vH_{\text{jnt}}^{(k)}=\vH^{(k)}+\left(\vS-\lambda_{\max}^{(\vS)}\vI_m\right)\vU^{(k)}\left(\vXi^{(k)}\right)^{-1}.
	\end{equation}		
	The term $\vH^{(k)}$ is given by \eqref{eq:H_cov} while $c_6$ is an optimization irrelevant constant.
\end{prop}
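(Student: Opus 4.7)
The plan is to upper bound the objective term by term, isolating $\vXi$ from $\vU$ in the coupling trace $\Tr(\vS\vU\vXi^{-1}\vU^T)$. The $\log\det(\vXi)$ term is already separable and is kept as is, and the sparsity penalty is majorized exactly as before via Lemma~\ref{lemma:zero norm upperbound}, which yields $2\Tr({\vH^{(k)}}^T\vU)+c_5$. All the work therefore concentrates on the middle term.

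First I would decompose
\begin{equation*}
\Tr(\vS\vU\vXi^{-1}\vU^T)=\Tr\bigl((\vS-\lambda_{\max}^{(\vS)}\vI_m)\vU\vXi^{-1}\vU^T\bigr)+\lambda_{\max}^{(\vS)}\Tr(\vU\vXi^{-1}\vU^T).
\end{equation*}
The second summand simplifies, using the constraint $\vU^T\vU=\vI_m$ under the cyclic property of the trace, to the clean $\vXi$-only term $\lambda_{\max}^{(\vS)}\Tr(\vXi^{-1})$, which is exactly the third piece of $g_\xi$. The first summand, call it $\phi(\vU,\vXi)$, is the one that still couples the variables.

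The key observation is that $\vN:=\lambda_{\max}^{(\vS)}\vI_m-\vS\succcurlyeq 0$, so factoring $\vN=\vN^{1/2}\vN^{1/2}$ and setting $\vA=\vN^{1/2}\vU$ gives $-\phi(\vU,\vXi)=\Tr(\vA\vXi^{-1}\vA^T)$, which is the matrix fractional function and thus jointly convex on $(\vA,\vXi)$ with $\vXi\succ 0$. Since $\vA$ is linear in $\vU$, $-\phi$ is jointly convex in $(\vU,\vXi)$, and hence $\phi$ is jointly concave. Any concave function lies below its first-order Taylor expansion, so at the iterate $(\vU^{(k)},\vXi^{(k)})$ I get
\begin{equation*}
\phi(\vU,\vXi)\leq\phi^{(k)}+\Tr\!\bigl(\nabla_{\vU}\phi^{(k)\,T}(\vU-\vU^{(k)})\bigr)+\Tr\!\bigl(\nabla_{\vXi}\phi^{(k)\,T}(\vXi-\vXi^{(k)})\bigr),
\end{equation*}
with tangency at $(\vU^{(k)},\vXi^{(k)})$. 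Direct matrix calculus gives $\nabla_{\vU}\phi^{(k)}=2(\vS-\lambda_{\max}^{(\vS)}\vI_m)\vU^{(k)}(\vXi^{(k)})^{-1}$ and $\nabla_{\vXi}\phi^{(k)}=-(\vXi^{(k)})^{-1}{\vU^{(k)}}^{T}(\vS-\lambda_{\max}^{(\vS)}\vI_m)\vU^{(k)}(\vXi^{(k)})^{-1}=\vG_{\text{jnt}}^{(k)}$, exactly matching the definition \eqref{eq:G_jnt}.

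Finally, I would collect all the pieces. The linear-in-$\vXi$ term is $\Tr(\vG_{\text{jnt}}^{(k)}\vXi)$, which combines with $\log\det(\vXi)$ and $\lambda_{\max}^{(\vS)}\Tr(\vXi^{-1})$ to form $g_{\xi}(\vXi)$. The two linear-in-$\vU$ contributions, $2\Tr\!\bigl((\vS-\lambda_{\max}^{(\vS)}\vI_m)\vU^{(k)}(\vXi^{(k)})^{-1}\bigr)^{T}\vU$ from the Taylor step and $2\Tr({\vH^{(k)}}^{T}\vU)$ from Lemma~\ref{lemma:zero norm upperbound}, merge into $2\Tr(\vH_{\text{jnt}}^{(k)\,T}\vU)=g_u(\vU)$ with $\vH_{\text{jnt}}^{(k)}$ as in \eqref{eq:H_jnt}. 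Everything left is independent of $(\vU,\vXi)$ and is absorbed into $c_6$. Equality at $(\vU^{(k)},\vXi^{(k)})$ is inherited from both the Taylor expansion and Lemma~\ref{lemma:zero norm upperbound}. The only subtle point, and the one I would flag as the main obstacle, is justifying the joint concavity of $\phi$; once the matrix fractional reformulation is in hand, the remaining derivation is mechanical.
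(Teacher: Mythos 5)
Your proposal is correct and follows essentially the same route as the paper: majorize the penalty via Lemma~\ref{lemma:zero norm upperbound}, split off $\lambda_{\max}^{(\vS)}\Tr(\vU\vXi^{-1}\vU^T)=\lambda_{\max}^{(\vS)}\Tr(\vXi^{-1})$ using $\vU^T\vU=\vI_m$, and linearize the remaining jointly concave term to obtain \eqref{eq:second term jointMM upper bound}. The only cosmetic difference is that you certify joint convexity of $\Tr(\vA\vXi^{-1}\vA^T)$ by invoking the matrix fractional function directly, whereas the paper derives the same fact via the Schur-complement epigraph; the gradients and the resulting $\vG_{\text{jnt}}^{(k)}$ and $\vH_{\text{jnt}}^{(k)}$ match.
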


\begin{proof}
	Based on Lemma \ref{lemma:zero norm upperbound} we can upper bound the third term of the objective with the function $2\Tr\left({\vH^{(k)}}^T\vU\right) + c_4$,
	with $\vH^{(k)}$ given by \eqref{eq:H_cov}.
	
	The second term of the objective function of \eqref{opt:cov_est_joint_MM_approxL0}, denoted by $f$, is jointly convex on $\vU,\vXi=\diag(\vxi)$. One way to establish convexity of $f$ is via its epigraph using the Schur complement:
	\begin{equation*}
	\epi(f)=\!\Bigg\{\!\left(\vU,\vxi,t\right)\Bigg\vert\diag(\vxi)\succ\vzr,\Large{\left[\begin{smallmatrix} \diag(\vxi\otimes\vone_m) & \tilde{\vu}\\
	\tilde{\vu}^T & t
	 \end{smallmatrix}\right]}\normalsize\!\succcurlyeq\vzr\Bigg\},
	\end{equation*}
	where $\tilde{\vu}=\vect\left(\vS^{1/2}\vU\right)$. Without loss of generality we have assumed that all the eigenvalues $\xi_i$ are strictly positive. The last condition is a linear matrix inequality in $(\vU, \vxi, t)$, and therefore $\epi(f)$ is convex.
	
	We can subtract the maximum eigenvalue of the sample covariance matrix $\vS$ and therefore create a jointly concave term. An upper bound to this term is its first order Taylor expansion. It can be shown that 
	\begin{equation}\label{eq:second term jointMM upper bound}
		\begin{aligned}
			\Tr\left(\vS\vU\vXi^{-1}\vU^T\right)\leq& 2\Tr\left({\vF^{(k)}}^T\vU\right)+\Tr\left(\vG_{\text{jnt}}^{(k)}\vXi\right)\\
			&+\lambda_{\max}^{(\vS)}\Tr\left(\vXi^{-1}\right)+c_5
		\end{aligned} 
	\end{equation}
	where 
	$\vF^{(k)}=\left(\vS-\lambda_{\max}^{(\vS)}\vI_m\right)\vU^{(k)}\left(\vXi^{(k)}\right)^{-1}$
	and
	$\vG_{\text{jnt}}^{(k)}=-\left(\vXi^{(k)}\right)^{-1}\!{\vU^{(k)}}^T\!\!\vF^{(k)}$.
	The constant $c_5$ is given by
	$c_5	\!=\! -\Tr\left(\vF^{(k)}{\vU^{(k)}}^T\right)-\Tr\left(\vG_{\text{jnt}}^{(k)}\vXi^{(k)}\right)$.
	
	We observe that now the variables are decoupled. Thus, by combining the upper bounds for the second and the third term we can derive the functions $g_u(\cdot)$ and $g_{\xi}(\cdot)$, with $\vH_{\text{jnt}}^{(k)}=\vH^{(k)}+\vF^{(k)}$
	and 
	$c_6=c_4+c_5$.
\end{proof}

Now, in every MM iteration we need to solve the following optimization problem:
\begin{equation}
	\begin{aligned}
		\underset{\vU,\vXi}{\text{minimize}}&\quad g_{\xi}(\vXi) + g_u(\vU)\\
		\text{subject to}&\quad \vXi\succcurlyeq0,\\
		&\quad \xi_{i}\geq \xi_{i+1}, \quad i=1,\dots, q-1,\\
		&\quad \xi_{q}\geq \xi_{q+i}, \quad i=1,\dots,m-q,\\
		&\quad \vU^T\vU=\vI.
	\end{aligned}
\end{equation}

Since the variables are decoupled we can optimize each one of them separately. The optimization problem for $\vXi$ becomes:
\begin{equation}\label{opt:xi_joint_MM}
	\begin{aligned}
		\underset{\vxi}{\text{minimize}}&\quad \sum_{i=1}^{m}\left(\log\xi_i+\alpha_i\xi_i+\lambda_{\max}^{(\vS)}\frac{1}{\xi_i}\right)\\
		\text{subject to}&\quad \xi_{i}\geq \xi_{i+1}, \quad i=1,\dots, q-1,\\
		&\quad \xi_{q}\geq \xi_{q+i}, \quad i=1,\dots,m-q,
	\end{aligned}
\end{equation}
where $\vec{\alpha}=\Diag\left(\vG_{\text{jnt}}^{(k)}\right)$.

The above problem is not convex. We can make it convex though with the following simple variable transformation: 
\begin{equation}\label{eq:variable transformation}
	\vphi=\frac{1}{\vxi}.
\end{equation}
Now, the problem becomes
\begin{equation}\label{opt:phi_joint_MM}
	\begin{aligned}
		\underset{\vphi}{\text{minimize}}&\quad \sum_{i=1}^{m}\left(-\log\phi_i+\alpha_i\frac{1}{\phi_i}+\lambda_{\max}^{(\vS)}\phi_i\right)\\
		\text{subject to}&\quad \phi_{i}\leq \phi_{i+1}, \quad i=1,\dots, q-1,\\
		&\quad \phi_{q}\leq \phi_{q+i}, \quad i=1,\dots,m-q,
	\end{aligned}
\end{equation}
which is in a convex form.

Similar to the alternating optimization case, the problem \eqref{opt:phi_joint_MM} does not have a closed form solution. Again, we can find an iterative closed form update of the parameter $\vec{\alpha}$ that will provide the optimal solution. 

We start from the corresponding unconstrained problem whose solution is 
\begin{equation}
	\vphi=\frac{1+\sqrt{1+4\lambda_{\max}^{(\vS)}\vec{\alpha}^{(0)}}}{2\lambda_{\max}^{(\vS)}}, 
\end{equation}
where $\vec{\alpha}^{(0)}=\vec{\alpha}$. We can distinguish the same three cases as for problem \eqref{opt:lambda_block_MM}, where the set $\cA$ now is given by
\begin{equation}
	\cA\!=\!\Bigg\{\!c_i\Bigg\vert\alpha_{c_i}^{(k)}\!\leq\!\frac{1}{r\!+\!l\!-\!i\!+\!1}\!\left(\sum_{s=0}^{r}\alpha_{q-s}^{(k)}\!+\!\!\!\sum_{s=0}^{l-i-1}\!\!\alpha_{c_{l-s}}^{(k)}\!\!\right)\!\!\!\Bigg\}.\label{eq:active set_jointMM2}
\end{equation}

\begin{prop}\label{prop:lambda_joint_MM}
	The iterative closed-form update procedure given in Table \ref{tab:lambda_joint_MM solution} converges to the solution of problem \eqref{opt:phi_joint_MM}.
\end{prop}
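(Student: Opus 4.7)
The plan is to argue that this proposition is a direct analog of Proposition~\ref{prop:lambda_block_MM} for problem \eqref{opt:lambda_block_MM}, whose proof is already given in Appendix~\ref{app:proof of algorithm block MM}; so the strategy is to check that each step of that argument survives the replacement of the objective $-\log\lambda_i + z_i\lambda_i$ by $-\log\phi_i + \alpha_i/\phi_i + \lambda_{\max}^{(\vS)}\phi_i$ (here $\alpha_i\ge 0$ since it is a diagonal entry of the p.s.d.\ matrix $\vG_{\text{jnt}}^{(k)}$ up to the sign picked up in \eqref{eq:G_jnt}, which is checked directly). The first observation is that the per-coordinate function $h_i(\phi)=-\log\phi+\alpha_i/\phi+\lambda_{\max}^{(\vS)}\phi$ is strictly convex on $\phi>0$, its derivative $h_i'(\phi)=-1/\phi-\alpha_i/\phi^2+\lambda_{\max}^{(\vS)}$ is monotone increasing, and the stationary equation $\lambda_{\max}^{(\vS)}\phi^2-\phi-\alpha_i=0$ has the unique positive root written in Case~1 of Table~\ref{tab:lambda_joint_MM solution}. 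Hence the feasibility check in Case~1 and the returned solution are correct when no constraint is active.

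Next I would write out the KKT conditions of \eqref{opt:phi_joint_MM}. The feasible set is a convex polyhedron (an isotonic chain on $\phi_1,\dots,\phi_q$ together with lower-bound constraints $\phi_{q+i}\ge\phi_q$), so by strict convexity of the objective a KKT point is the unique global minimizer. The key computation is that, whenever a block of consecutive ordering inequalities is active, the stationarity equations can be summed across the block: adding $h_{j+i}'(\phi)=\mu_{j+i-1}-\mu_{j+i}$ for $i=0,\dots,k$ telescopes the multipliers to zero and yields $\sum_{i=0}^{k}h_{j+i}'(\phi)=0$, which after dividing by $k+1$ is precisely the stationary equation $h^{avg}(\phi)=0$ for the averaged $\alpha$. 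Since $h^{avg}$ shares the same form as a single $h_i$, its unique positive root is obtained from the Case~1 formula evaluated at $\overline{\alpha}=\frac{1}{k+1}\sum_{i=0}^{k}\alpha_{j+i}^{(k)}$. This justifies the Case~2 block update.

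For Case~3, the subtlety is that the unordered tail indices $q+1,\dots,m$ only join the pool with $\phi_{q-r},\dots,\phi_q$ when it is ``optimal'' to tie them to $\phi_q$, and this is captured by the active set $\cA$ in \eqref{eq:active set_jointMM2}. I would verify that $\cA$ is characterized by the complementary-slackness condition: index $c_i$ is tied to $\phi_q$ iff the averaged stationary equation that results from merging $c_i$ into the block gives a value no larger than the corresponding unconstrained root $\phi_{c_i}$. The inequality in \eqref{eq:active set_jointMM2}, namely $\alpha_{c_i}^{(k)}\le \frac{1}{r+l-i+1}(\sum_{s=0}^{r}\alpha_{q-s}^{(k)}+\sum_{s=0}^{l-i-1}\alpha_{c_{l-s}}^{(k)})$, is exactly the monotone transform of that condition since $\phi^*(\alpha)$ is increasing in $\alpha$. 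The iterative nature of $\cA$ is then needed because adding or removing one index changes the running average and may toggle membership of the remaining indices; here I would invoke the same monotone inclusion argument as in Appendix~\ref{app:proof of algorithm block MM} to show that the procedure converges in finitely many steps.

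The main obstacle I anticipate is bookkeeping in Case~3 rather than anything conceptual: one must show that (i) when an index $c_i$ leaves the pool its $\phi_{c_i}$ returns to its own unconstrained root and no new violation with $\phi_q$ is created, and (ii) the sequence of block-averaged stationary values is monotone so that the fixed point is reached in finitely many sweeps. Both facts follow from the monotonicity of $\phi^*(\alpha)$ together with a Schur-style averaging inequality, exactly as in the proof of Proposition~\ref{prop:lambda_block_MM}; with those in hand, convexity of the objective on the polyhedral feasible set upgrades the KKT point produced by the procedure to the unique global optimum of \eqref{opt:phi_joint_MM}.
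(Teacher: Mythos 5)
Your proposal follows exactly the route the paper intends: the paper omits this proof, stating only that it follows the same steps as Proposition~\ref{prop:lambda_block_MM}, and the modifications you identify (strict convexity of the new per-coordinate objective, the quadratic stationarity equation yielding the Case-1 root, and the reversal of all inequalities in the conditions and in the active set $\cA$ because $\phi^\star(\alpha)$ is increasing in $\alpha$ whereas $\lambda^\star=1/z$ is decreasing in $z$) are precisely the ones needed. The only imprecision is that summing the stationarity equations over a block leaves the boundary multipliers $\mu_{j-1}-\mu_{j+k}$ rather than telescoping to zero; this is exactly why the procedure must be iterated on the reduced KKT system of the same form, as in Appendices~\ref{app:proof of lemma constr case1} and~\ref{app:proof of lemma constr case2}, but it does not change your argument in substance.
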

\begin{proof}
	The proof of Proposition \ref{prop:lambda_joint_MM} follows the same steps as the proof of Proposition \ref{prop:lambda_block_MM}, thus it is omitted.	
\end{proof}

Having obtained the optimal $\vphi^\star$, it is easy to retrieve $\vxi^\star$ from \eqref{eq:variable transformation}. 

The optimization problem for $\vU$ is the following:
\begin{equation}
	\begin{aligned}\label{opt:procrustes_trace form_jointMM}
		\underset{\vU}{\text{minimize}}&\quad \Tr\left({\vH^{(k)}_{\text{jnt}}}^T\vU\right)\\
		\text{subject to}&\quad \vU^T\vU=\vI_m.
	\end{aligned}
\end{equation}

\begin{prop}
	The optimal solution of the optimization problem \eqref{opt:procrustes_trace form_jointMM} is $\vU^\star=\vV_L\vV_R^T$, where $\vV_L\in\bR^{m\times m}$ and $\vV_R\in\bR^{m\times m}$ are the left and right singular vectors of the matrix $-\vH_{\text{jnt}}^{(k)}$, respectively.
\end{prop}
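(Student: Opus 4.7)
The plan is to reduce this proposition to a one-line application of Lemma \ref{lemma:Procrustes}. First I would rewrite the minimization problem \eqref{opt:procrustes_trace form_jointMM} as a maximization by negating the objective: minimizing $\Tr\bigl({\vH_{\text{jnt}}^{(k)}}^T\vU\bigr)$ on the Stiefel manifold $\{\vU\mid\vU^T\vU=\vI_m\}$ is equivalent to maximizing $\Tr\bigl((-\vH_{\text{jnt}}^{(k)})^T\vU\bigr)$ over the same feasible set, because the constraint is unaffected and the sign flip simply reverses the direction of optimization.

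Once the problem is in maximization form, it matches exactly the template of problem \eqref{opt:procrustes_trace form} under the identifications $\vX\leftarrow\vU$, $\vY\leftarrow-\vH_{\text{jnt}}^{(k)}$, and $q=m$. Lemma \ref{lemma:Procrustes} then yields the optimizer $\vU^\star=\vV_L\vV_R^T$, where $\vV_L$ and $\vV_R$ are the matrices of left and right singular vectors of $\vY=-\vH_{\text{jnt}}^{(k)}$, which is exactly the claimed formula.

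There is essentially no obstacle: the argument is a direct corollary of the Procrustes lemma after a trivial sign change, and it mirrors the two earlier propositions in the paper (the one following \eqref{opt:sp_eig_trace form} in Section \ref{sec:sp_eig_est} and the one following \eqref{opt:procrustes_trace form_blockMM} in Section \ref{subsec:alt_opt}) that reduce MM-subproblem minimizations to the same template. The only subtlety worth flagging in the written proof is that $\vU$ here is square ($m=q$), so we are in the orthogonal rather than rectangular Procrustes case; Lemma \ref{lemma:Procrustes} explicitly covers both, so no additional work is required, and I would therefore present the proof as a single sentence pointing to Lemma \ref{lemma:Procrustes} after negating the objective.
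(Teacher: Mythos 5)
Your proposal is correct and matches the paper's proof, which likewise invokes Lemma \ref{lemma:Procrustes} directly after the implicit sign flip that turns the minimization of $\Tr\bigl({\vH_{\text{jnt}}^{(k)}}^T\vU\bigr)$ into the maximization of $\Tr\bigl((-\vH_{\text{jnt}}^{(k)})^T\vU\bigr)$. Your added remark that the square case $m=q$ falls under the orthogonal Procrustes variant of the lemma is accurate and requires no further work.
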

\begin{proof}
	The proof comes directly from Lemma \ref{lemma:Procrustes}.
\end{proof}

In Algorithm \ref{alg:Alg3} we summarize the above iterative procedure. We will refer to it as JOCE.

\section{NUMERICAL EXPERIMENTS}
\label{sec:num_exper}

\subsection{Random Data Drawn from a Sparse PCA Model}
\label{ssec:exper_rand_data}

In the first experiment, we compare the performance of the proposed IMRP algorithm with a benchmark algorithm GPower$_{\ell_0}$ proposed in \cite{Journee10GeneralizedSPCA}. Note that all four GPower algorithms that are proposed in \cite{Journee10GeneralizedSPCA} have very similar performance in terms of chance of recovery and percentage of explained variance. Thus, it is sufficient to consider only one of them.  

\begin{figure}[t]
	\centering
	\includegraphics[width=0.9\columnwidth]{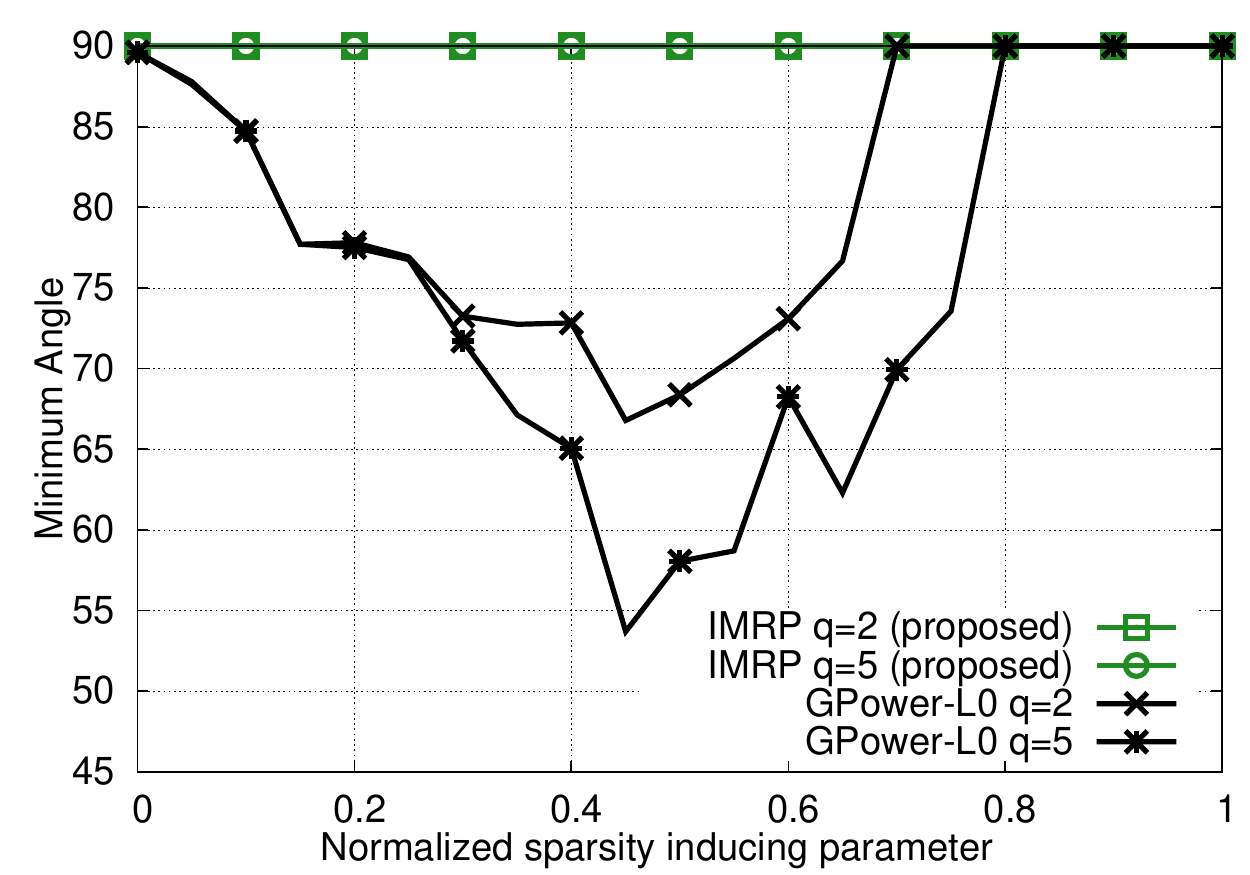}
	\caption{Minimum angle vs normalized regularization parameter.}
	\label{fig:AngMin}
\end{figure}

We first examine the orthogonality of the estimated sparse eigenvectors. We define the angle between eigenvectors $i,j$ as follows:
\begin{equation}
\theta_{ij}\!=\!\min\left(|\arccos\left(\vv_i^T\vv_j\right)|,180^o\!-\!|\arccos\left(\vv_i^T\vv_j\!\right)|\right).
\end{equation}
We consider a setup with $m=500$ and $n=50$. We construct 100 covariance matrices $\vSigma$ through their eigenvalue decomposition $\vSigma=\vV\diag(\vec{\lambda})\vV^T$, where the first $k=5$ columns of $\vV\in\bR^{m\times m}$ are of the following form:
\begin{equation}
	\begin{aligned}
	&\begin{cases}
	v_{ij} \neq 0,&\qquad\text{for $i=1,\dots,10$, $j=1,\dots,5$,}\\
	v_{ij} = 0, &\qquad\text{otherwise},
	\end{cases}
	\end{aligned}
\end{equation}
where the non-zero values are such that the eigenvectors are orthonormal. The remaining eigenvectors are generated randomly, satisfying the orthogonality property. The eigenvalues are set to be $\lambda_i=100(k-i+1)$ for $i=1,\dots,5$, and the rest are set to one. 

For each of the covariance matrix $\vSigma$, we randomly generate 50 data matrices $\vA\in\bR^{m\times n}$ by drawing $n$ samples from a zero-mean normal distribution with covariance matrix $\vSigma$, i.e., i.e., $\va_i\sim\cN(\vzr,\vSigma)$, for $i=1,\dots,n$. Then we employ the two algorithms to compute the first two and the first five sparse eigenvectors. In Figure \ref{fig:AngMin} we plot the minimum angle between any two eigenvectors, i.e., $\min_{i,j}(\theta_{i,j})$ for a wide range of the regularization parameter $\rho$. It is clear that the proposed IMRP algorithm (after thresholding) is orthogonal\footnote{Orthogonality in the sense that $|\vu_i^T\vu_j|\leq\epsilon$, where, in the worst case, $\epsilon$ is in the order of magnitude of the selected threshold $t$, and $i\neq j$. For example, for $t=10^{-12}$, the inner product $|\vu_i^T\vu_j|$ is effectively zero for all practical purposes.} for any choice of $\rho$, while for the GPower$_{\ell_0}$ algorithm the are cases that the estimated eigenvectors have angle less than $55^o$. For large values of $\rho$, GPower$_{\ell_0}$ gives orthogonal results since the sparsity level is high and the estimated eigenvectors do not have overlapping support.

Now, to illustrate the sparse recovering performance of our algorithm we generate synthetic data as in \cite{Journee10GeneralizedSPCA,Song15Sparse,Yuan13truncated}. To this end, we construct a covariance matrix $\vSigma$ through the eigenvalue decomposition $\vSigma=\vV\diag(\vec{\lambda})\vV^T$, where the first $q$ columns of $\vV\in\bR^{m\times m}$ have a pre-specified sparse structure. We consider a setup with $m=500$, $n=50$ and $q=2$. We set the first two orthonormal eigenvectors to be
\begin{equation}\label{eq:predefined eigenvectors}
	\begin{aligned}
		&\begin{cases}
			v_{i1} = \frac{1}{\sqrt{10}},&\qquad\text{for $i=1,\dots,10$,}\\
			v_{i1} = 0, &\qquad\text{otherwise},
		\end{cases}\\
		&\begin{cases}
			v_{i2} = \frac{1}{\sqrt{10}},&\qquad\text{for $i=11,\dots,20$,}\\
			v_{i2} = 0, &\qquad\text{otherwise}.
		\end{cases}
	\end{aligned}
\end{equation}
The remaining eigenvectors are generated randomly, satisfying the orthogonality property. We set the eigenvalues to be $\lambda_1 = 400$, $\lambda_2 = 300$ and $\lambda_i = 1$ for $i=3,\dots,500$.

We randomly generate 500 data matrices $\vA\in\bR^{m\times n}$ by drawing $n$ samples from a zero-mean normal distribution with covariance matrix $\vSigma$, i.e., $\va_i\sim\cN(\vzr,\vSigma)$, for $i=1,\dots,n$. Then, we employ the two algorithms to compute the two leading sparse eigenvectors $\vu_1,\vu_2\in\bR^{500}$. We consider a successful recovery when both quantities $|\vu_1^T\vv_1|$ and $|\vu_2^T\vv_2|$ are greater than $0.99$. 

The chance of successful recovery over a wide range of the regularization parameters $\rho_i$ is plotted in Figure \ref{fig:Recovery}. The horizontal axis shows the normalized
regularization parameters that follows the normalization proposed in  \cite{Song15Sparse}, i.e., $\rho_i =  \max_j \|\va_j\|_2^2$. From the figure we can see that the proposed algorithm IMRP achieves a higher chance of exact recovery for a wide range of the parameters.

\begin{figure}[t]
	\centering
	\includegraphics[width=0.9\columnwidth]{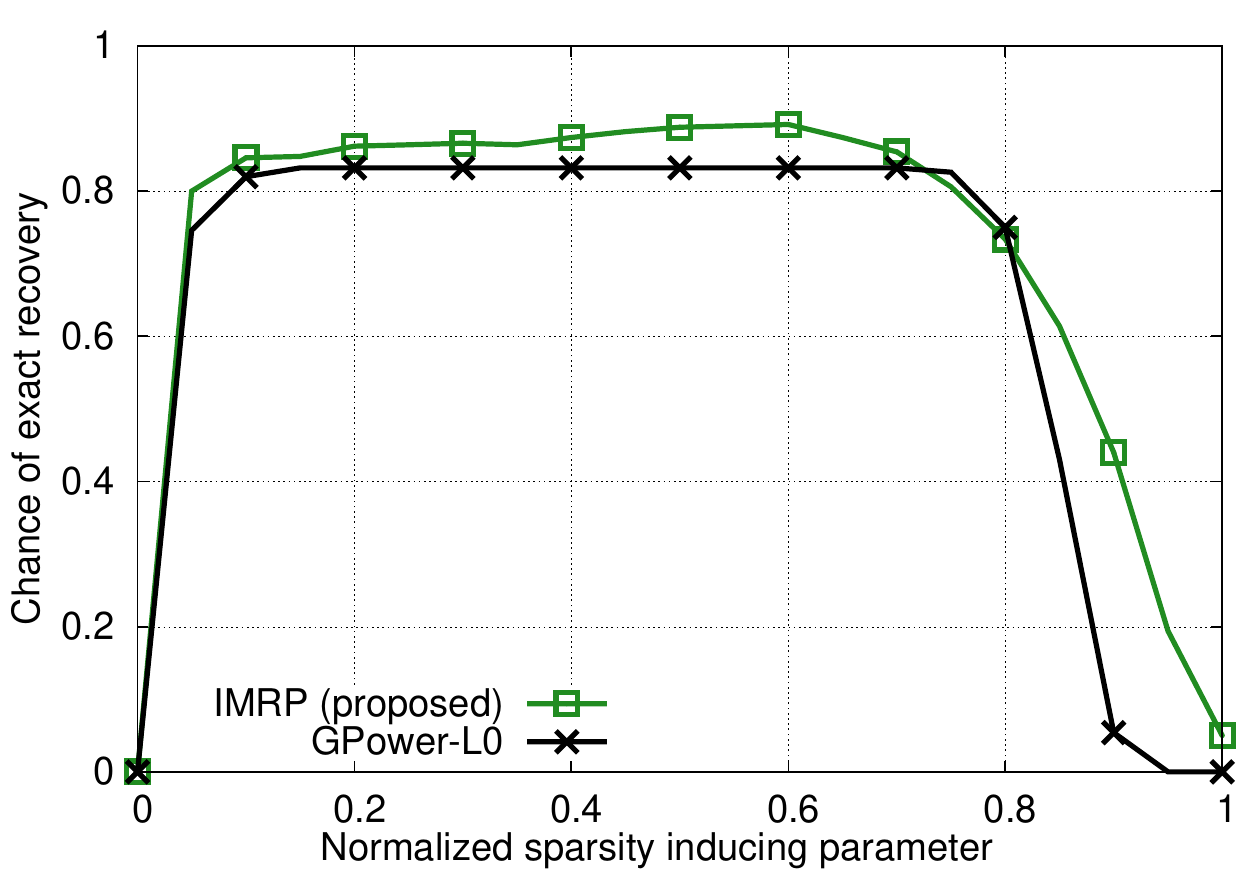}
	\caption{Chance of exact recovery vs normalized regularization parameter.}
	\label{fig:Recovery}
\end{figure}

\subsection{Gene Expression Data}
\label{ssec:exper_gene_data}

In this subsection we compare the performance of the two algorithms on the gene expression dataset collected in the breast cancer study by Bild et al. \cite{Bild06oncogenic}. The dataset contains $158$ samples over $12,625$ genes. We consider the $4,000$ genes with the largest variances and we estimate the  first $5$ eigenvectors.

Notice that due to the orthogonality constraints, increasing the cardinality does not necessarily mean that the CPEV will increase. To this end, for a fixed cardinality, we depict the maximum variance being explained from the sparse eigenvectors up to this cardinality. Thus, the CPEV for cardinality $i$, denoted as CPEV$_i$, is being post-processed as follows:  
\begin{equation}\label{eq:CPEV_i}
\text{CPEV}_i=\max(\text{CPEV}_i,\text{CPEV}_{i-1}).
\end{equation}

{In Figure \ref{fig:ExpVar} we illustrate the cumulative percentage of explained variance, computed by Eq. \eqref{eq:CPEV} and post-processed by \eqref{eq:CPEV_i}, versus the cardinality for the IMRP and GPower$_{\ell_0}$ algorithms. For maximum cardinality the percentage of explained variance becomes $1$ for both algorithms. For fixed cardinality, the two algorithms can explain approximately the same amount of variance. For comparison we have also included the simple thresholding scheme which first computes the regular principal component and then keeps a required number of entries with largest absolute values.

\begin{figure}[t]
	\centering
	\includegraphics[width=0.9\columnwidth]{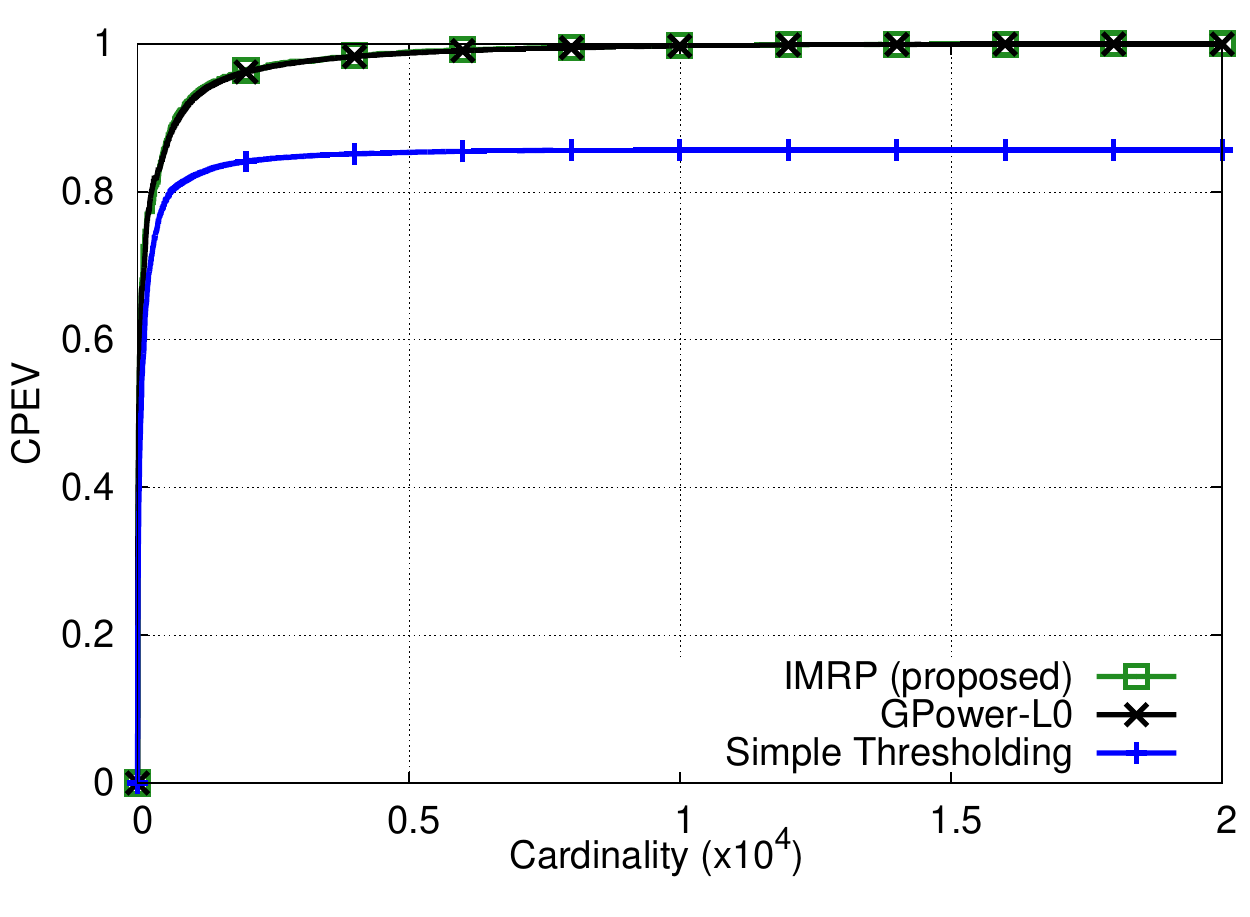}
	\vspace*{-2.5mm}
	\caption{CPEV vs cardinality.}
	\label{fig:ExpVar}
\end{figure}

\subsection{Covariance Estimation}
\label{ssec:covariance_estimation}

In this experiment we consider again the setting and data generation process of section \ref{ssec:exper_rand_data}, with the only difference that we reduce the dimension to $m=200$.
We employ the AOCE and JOCE algorithms to estimate the covariance matrix $\vSigma$. We compute the relative mean square error (RelMSE) for each algorithm, defined as
\begin{equation}
	\text{RelMSE}(\vec{\hat{S}})=1-\frac{\text{MSE}(\vec{\hat{S}})}{\text{MSE}\left(\vS\right)},
\end{equation} 
where $\text{MSE}\left(\vX\right)=\|\vX-\vSigma\|_F^2$,
while $\vec{\hat{S}}$ is the estimated covariance matrix from the two algorithms and $\vS$ is the sample covariance matrix.

From Figure \ref{fig:CovMSE} we observe that AOCE performs better for a small number of samples, while after one point the algorithms have the same performance. Both of the algorithms improve significantly the estimation of the covariance matrix. For example, for $n=m$, the improvement is around $35\%$. For $n\leq m$, instead of $\vS$ we use $\vS_{\text{sh}}$ as defined in \eqref{eq:S_sh}. The parameter $\delta$ is chosen based on a grid search. For this case, in order to show that the improvement in estimation is not due to shrinkage, we include the RMSE for $\vS_{\text{sh}}$. It is clear from the plot that the improvement from shrinkage is around $5\%$. This explains the slight estimation improvement of AOCE and JOCE for $n\leq m$. 

\begin{figure}[t]
	\centering
	\includegraphics[width=0.9\columnwidth]{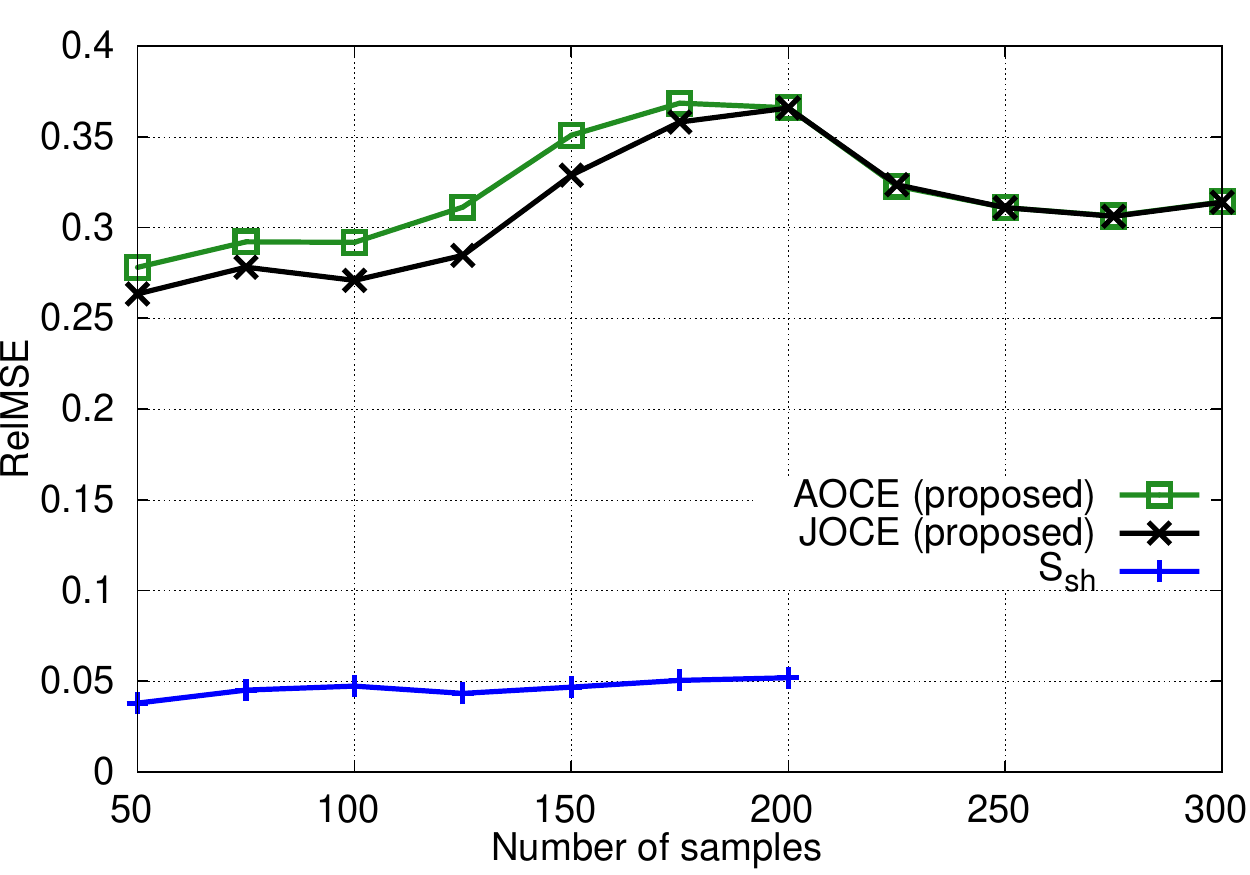}
	\caption{RelMSE vs number of samples.}
	\label{fig:CovMSE}
\end{figure}

\section{CONCLUSION}
\label{sec:conclusion}

In this paper, we first proposed a new algorithm for sparse eigenvalue extraction. The algorithm is derived based on the minorization-majorization method that was applied after a smooth approximation of the $\ell_0$-norm. Unlike all the other state of the art methods, the resulting sparse eigenvectors from our proposed method maintain their orthogonality property. We further formed a covariance estimation problem using the eigenvalue decomposition of the covariance matrix. We simultaneously imposed sparsity on some of the principal eigenvectors to improve the estimation performance. We have proposed two algorithms, based on the MM framework, to efficiently solve the above problem. Numerical experiments have shown that IMRP matches or outperforms existing algorithms while AOCE and JOCE improve significantly the estimation of the covariance matrix.

\appendices
\section{Proof of Lemma \ref{lemma:zero norm upperbound}}
\label{app:proof of zero norm upperbound}

\begin{proof}
Following the same approach as \cite{Song15Sparse}, we can bound the function $\sum_{i=1}^{q}\rho_i \sum_{j=1}^{m}g_p^{\epsilon}\left(u_{ij}\right)$ with a weighted quadratic one. Based on the results of \cite{Song15Sparse} and by incorporating the sparsity parameters $\rho_i$ to the corresponding weights, it holds that
\begin{equation*}
	\sum_{i=1}^{q}\rho_i \sum_{j=1}^{m}g_p^{\epsilon}\left(u_{ij}\right)\leq\vect(\vU)^T\diag\left(\vw\right)\vect(\vU),
\end{equation*}
with the weights $\vw\in\bR_+^{mq}$ given by \eqref{eq:weights}.
Now, the idea is to create a concave term and linearize it since the linear approximation of a concave function is an upper bound of the function. We define $\vw_{\max}\in\bR^q_+$, with $w_{\max,i}$ being the maximum weight that corresponds to the $i$-th eigenvector. For convenience we further define $\vu=\vect(\vU)$, $\vW_d=\diag(\vw)$ and $\vW_m=\diag(\vw_{\max}\otimes\vone_{m})$. Now, we can bound the weighted quadratic function as follows:
\begin{align*}
	\vu^T\vW_d\vu=&\vu^T\left(\vW_d-\vW_m\right)\vu+\vu^T\vW_m\vu\\
	=&\vu^T\left(\vW_d-\vW_m\right)\vu+\vone_m^T\vw_{\max}\\
	\leq& \vu_0^T\!\left(\vW_d-\vW_m\right)\vu_0 \!+\!2\vu_0^T\!\left(\vW_d\!-\!\vW_m\right)(\vu\!-\!\vu_0)\\ 
	&+ \vone_m^T\vw_{\max}\\
	=& 2\vu_0^T\left(\vW_d-\vW_m\right)\vu-\vu_0^T\vW_d\vu_0\\
	&+ 2\left(\vone_m^T\vw_{\max}\right)\\
	=& 2\Tr\left(\vH^T\vU\right) +2\left(\vone_m^T\vw_{\max}\right)\!-\vu_0^T\vW_d\vu_0,
\end{align*}
where $\vH=\big[\left(\vW_d-\vW_m\right)\vu_0\big]_{m\times q}$. This completes the proof.
\end{proof}

\section{Proof of Proposition \ref{prop:lambda_block_MM}}
\label{app:proof of algorithm block MM}

\begin{proof}
	For convenience, in all the proofs we drop the superscript $(k)$ that denotes the current iteration. We denote the updates of $\vz$ by $\bar{\vz}$, i.e., if $\vz=\vz^{(k)}$ then $\bar{\vz}=\vz^{(k+1)}$.
	
	The Lagrangian of the optimization problem \eqref{opt:lambda_block_MM} is
	\begin{align}
	L(\vec{\lambda},\vec{\mu},\vec{\nu})=&-\!\sum_{i=1}^{m}\log\lambda_i\!+\!\sum_{i=1}^{m}z_i\lambda_i\!+\!\sum_{i=1}^{q-1}\mu_i(\lambda_i-\lambda_{i+1}) \nonumber\\ 
	&+ \sum_{i=1}^{m-q}\nu_{q+i}(\lambda_q-\lambda_{q+i}),
	\end{align}
	with $\vlambda\in\bR^m_+$, $\vec{\mu}\in\bR^{q-1}_+$ and $\vec{\nu}\in\bR^{m-q}_+$. Now, we can derive the following Karush-Kuhn-Tucker (KKT) conditions \cite{Boyd04ConvexOptimization}:
	
	\begin{align}
	-\frac{1}{\lambda_1} + z_1 +\mu_1= 0,& \label{KKT partial1}\\
	-\frac{1}{\lambda_i} + z_i +\mu_i-\mu_{i-1}=0,& \quad i=2,\dots,q-1,\label{KKT partial2}\\
	-\frac{1}{\lambda_q}\! +\! z_q\! -\!\mu_{q-1}\! +\!\!\sum_{i=1}^{m-q}\!\nu_i\!=\!0,&\label{KKT partial3}\\
	-\frac{1}{\lambda_{q+i}} + z_{q+i} -\nu_{q+i}=0,& \quad i=1,\dots,m-q,\label{KKT partial4}\\
	\lambda_{i}-\lambda_{i+1}\leq0, &\quad i=1,\dots,q-1,\label{KKT primal1}\\
	\lambda_{q}-\lambda_{q+i}\leq0, &\quad i=1,\dots,m-q,\label{KKT primal2}\\
	\mu_i\geq0,&\quad i=1,\dots,q-1,\label{KKT dual1}\\
	\nu_{q+i}\geq0,&\quad i=1,\dots,m-q,\label{KKT dual2}\\
	\mu_i(\lambda_i - \lambda_{i+1})=0,&\quad i=1,\dots,q-1,\label{KKT compslack1}\\
	\nu_{q+i}(\lambda_q - \lambda_{q+i})=0,&\quad i=1,\dots,m-q\label{KKT compslack2}.
	\end{align}
	
	As a first result we can state the following lemma:
	
	\begin{lemma}
		\label{lemma: blockMM_unconstr_sol}
		The solution of the KKT system \eqref{KKT partial1}-\eqref{KKT compslack2} is $\lambda_i=\frac{1}{z_i}$, for $i=1,\dots,m$, if the following conditions hold:
		\begin{align}
		z_i\geq& z_{i+1}, \quad i=1,\dots,q-1,\label{lemma_unconstr_eq1}\\
		z_q\geq& z_{q+i}, \quad i=1,\dots,m-q. \label{lemma_unconstr_eq2}
		\end{align}
		In this case all the Lagrange multipliers are zero.
	\end{lemma}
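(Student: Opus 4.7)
The plan is to verify directly that the candidate $\lambda_i = 1/z_i$ together with all Lagrange multipliers set to zero satisfies the full KKT system \eqref{KKT partial1}--\eqref{KKT compslack2}. Since the optimization problem \eqref{opt:lambda_block_MM} is convex (the objective is a sum of $-\log$ and a linear term, and all constraints are linear), the KKT conditions are both necessary and sufficient for optimality, so exhibiting any KKT point suffices.

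First I would substitute $\mu_i = 0$ for $i=1,\dots,q-1$ and $\nu_{q+i}=0$ for $i=1,\dots,m-q$ into the stationarity equations \eqref{KKT partial1}--\eqref{KKT partial4}. With zero multipliers, each equation collapses to $-1/\lambda_i + z_i = 0$, which immediately yields $\lambda_i = 1/z_i$ for every $i$ (this is well defined since $z_i \geq 0$ and, when $z_i=0$, the unconstrained problem is unbounded below in $\lambda_i$; we may assume $\vz > \vzr$ strictly because otherwise $\vz=\Diag(\vU^T\vS\vU)$ would indicate a degenerate direction).

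Next I would verify the dual feasibility conditions \eqref{KKT dual1}--\eqref{KKT dual2} and the complementary slackness conditions \eqref{KKT compslack1}--\eqref{KKT compslack2}; both are trivially satisfied because every multiplier is zero. The only nontrivial step is primal feasibility \eqref{KKT primal1}--\eqref{KKT primal2}: for $i=1,\dots,q-1$,
\begin{equation*}
\lambda_i - \lambda_{i+1} = \frac{1}{z_i} - \frac{1}{z_{i+1}} \leq 0 \iff z_i \geq z_{i+1},
\end{equation*}
which is precisely the hypothesis \eqref{lemma_unconstr_eq1}. Similarly, $\lambda_q - \lambda_{q+i} \leq 0$ is equivalent to $z_q \geq z_{q+i}$, which is \eqref{lemma_unconstr_eq2}.

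There is no real obstacle here: the only subtlety is making sure the $z_i$'s are strictly positive so that $1/z_i$ is well defined, which follows from $\vS \succcurlyeq \vzr$ together with the fact that any zero entry of $\vz$ would make the unconstrained objective unbounded below and hence prevent the existence of any optimizer. Once all four KKT blocks are checked, convexity closes the argument and gives both existence and optimality of $\lambda_i = 1/z_i$.
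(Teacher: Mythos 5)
Your proof is correct and follows essentially the same route as the paper, which simply asserts that the candidate primal/dual pair satisfies all KKT equations and invokes convexity for optimality; you merely spell out the verification (stationarity, feasibility, complementary slackness) in detail and add a reasonable remark about strict positivity of the $z_i$'s.
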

	\begin{proof}
		It is straightforward that if inequalities \eqref{lemma_unconstr_eq1} and \eqref{lemma_unconstr_eq2} hold, then the solutions of the primal and dual variables given in the above lemma satisfy all equations. Since the problem is convex, this solution is the optimal.  
	\end{proof}
	
	We can interpret Lemma \ref{lemma: blockMM_unconstr_sol} as follows: if the unconstrained problem has an optimal solution that is inside the feasible region of the constrained problem, then it is also the optimal solution of the constrained problem.
	
	Now, if the conditions of Lemma \ref{lemma: blockMM_unconstr_sol} do not hold, the solution of the unconstrained problem will violate a set of inequality constraints. We can distinguish two different types of violations.
	
	\paragraph{Violations in the first $q$ eigenvalues}
	
	Here, we consider the case where the solution of the unconstrained problem violates the ordering constraints of the first $q$ eigenvalues (Case $2$ of Table \ref{tab:lambda_block_MM solution}). In this case, we need to update the parameters $\vz$ according to the following Lemma:
	\begin{lemma}
		\label{lemma: blockMM_constr_sol_case1}
		For any block of $r$ consecutive inequality violations between the first $q$ eigenvalues, i.e., $\forall j,r$, with $j+r\leq q$, that the following conditions hold
		\begin{align}
		&\quad z_{j-1}> z_{j},\quad \text{if $j>1$},\\
		&\quad z_{i}\leq z_{i+1}, \quad i=j,\dots,j+r-1,\label{eq:viol1}\\
		&\begin{cases}
		z_{j+r}> z_{j+r+1}, \quad &\text{if $j+r<q$},\\
		z_q> z_{q+i},\quad i=1,\dots,m-q,\quad &\text{if $j+r=q$},
		\end{cases}
		\end{align}
		where at least one inequality of \eqref{eq:viol1} is strict, the update of the corresponding block of $\vz$ is
		\begin{equation}\label{eq:z_bar_case1}
		\bar{z}_i=\frac{1}{r+1}\sum_{s=0}^{r}z_{j+s}, \quad i=j,\dots,j+r.
		\end{equation}
		The new KKT system with the updated parameters has the same solution as the original one.
	\end{lemma}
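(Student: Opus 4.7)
The plan is to prove the lemma by exhibiting the same primal vector $\vlambda^*$ as a KKT point of both the original system and the updated system, and then invoking uniqueness from strict convexity of the log-barrier objective. The argument is essentially the pool-adjacent-violators reasoning translated into KKT language.

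First I would show that at the optimum of the original KKT system the ordering constraints throughout the violation block must all be tight: $\lambda^*_j=\lambda^*_{j+1}=\cdots=\lambda^*_{j+r}\triangleq\lambda^*$. This follows from the hypothesis $z_j\leq z_{j+1}\leq\cdots\leq z_{j+r}$ with at least one strict inequality: the unconstrained stationary point $\lambda_i=1/z_i$ would be non-increasing across the block with a strict drop somewhere, incompatible with $\lambda_i\leq\lambda_{i+1}$. Convexity of $-\sum\log\lambda_i+\sum z_i\lambda_i$ then forces the constrained optimum to saturate every ordering constraint in the block.

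Next I would sum the stationarity conditions \eqref{KKT partial1}--\eqref{KKT partial3} over $i=j,\dots,j+r$. The $\mu$-chain telescopes, yielding
\begin{equation*}
-\frac{r+1}{\lambda^*}+\sum_{s=0}^{r}z_{j+s}+(\text{boundary term})=0,
\end{equation*}
where the boundary term is $\mu^*_{j+r}-\mu^*_{j-1}$ when $j+r<q$ (with the convention $\mu^*_{0}\triangleq 0$), and $\sum_{i=1}^{m-q}\nu^*_{q+i}-\mu^*_{j-1}$ when $j+r=q$. The Case~2 strict inequalities at the block boundaries---$z_{j-1}>z_j$ if $j>1$, and either $z_{j+r}>z_{j+r+1}$ or $z_q>z_{q+i}$ for all $i$---imply that the neighbouring ordering constraints are strict at $\vlambda^*$, so by complementary slackness \eqref{KKT compslack1}--\eqref{KKT compslack2} the boundary multipliers vanish. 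The sum collapses to $\lambda^*=1/\bar z$ with $\bar z=\tfrac{1}{r+1}\sum_{s=0}^{r} z_{j+s}$.

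Finally I would verify that this $\vlambda^*$, paired with a suitably rechosen multiplier chain, solves the updated KKT system obtained by replacing each $z_{j+s}$ by $\bar z$. Substituting $\bar z$ into the block stationarity reduces, via $\lambda^*=1/\bar z$, to the equation $\mu_i=\mu_{i-1}$ inside the block. Keeping all multipliers outside the block unchanged and taking $\mu_i=0$ for $i\in[j,j+r-1]$ is consistent with the vanishing boundary multipliers, is non-negative, and respects complementary slackness since every block constraint is already an equality. Primal feasibility is inherited from the original solution. Because the updated objective remains strictly convex, this is its unique primal optimum, and the two KKT systems indeed share the same solution. The main obstacle I anticipate is the boundary bookkeeping: cleanly handling the $i=q$ interface between the $\mu$-chain and the $\nu$-group, the degenerate cases $j=1$ and $j+r=q$, and rigorously propagating the strict $z$-inequalities into strict $\lambda^*$-inequalities at the adjacent indices---particularly when $j-1$ or $j+r+1$ lies inside a previously pooled block, in which case one must argue through that block's collapsed value rather than through $1/z_{j-1}$ alone.
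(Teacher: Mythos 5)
Your overall strategy (pool the block, verify KKT equivalence) is the right one, and your first step --- that $\lambda^*_j=\cdots=\lambda^*_{j+r}$ --- reaches the correct conclusion, though ``convexity forces the constrained optimum to saturate every ordering constraint'' is not a proof; the paper establishes it by subtracting consecutive stationarity equations to get $2\mu_k=z_{k+1}-z_k+1/\lambda_k-1/\lambda_{k+1}+\mu_{k+1}+\mu_{k-1}>0$, hence $\mu_k>0$ and $\lambda_k=\lambda_{k+1}$ by complementary slackness, and then propagates this through the block.

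The genuine gap is your claim that the strict boundary inequalities $z_{j-1}>z_j$ and $z_{j+r}>z_{j+r+1}$ make the neighbouring ordering constraints inactive at $\vlambda^*$, so that $\mu^*_{j-1}=\mu^*_{j+r}=0$ and the telescoped sum collapses to $\lambda^*=1/\bar z$. That inference is false: the boundary conditions are stated in terms of the \emph{current} $z$, not of the optimal $\vlambda$, and after pooling, the block average $\bar z$ may itself violate the ordering with a neighbour, in which case the neighbouring constraint is active at $\vlambda^*$ and its multiplier is strictly positive. Concretely, take $m=q=3$ and $z=(1.2,\,1,\,2)$: the violating block is $\{2,3\}$ with $z_1=1.2>z_2=1$, yet the optimum of \eqref{opt:lambda_block_MM} is $\lambda^*_1=\lambda^*_2=\lambda^*_3=1/1.4$ with $\mu^*_1=0.2>0$, not $\lambda^*_2=\lambda^*_3=1/1.5$ as your collapse would give. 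This is exactly why the lemma asserts only that the \emph{updated KKT system has the same solution as the original} --- not that the block value equals $1/\bar z$ --- and why the procedure of Table \ref{tab:lambda_block_MM solution} must iterate until Case 1 is reached. The paper's proof accordingly keeps the boundary multipliers in the reduced stationarity equation $-1/\lambda_i+\bar z_i+\tfrac{1}{r+1}(\mu_{j+r}-\mu_{j-1})=0$ and never evaluates $\lambda^*$ at this stage. To repair your argument, drop the vanishing-multiplier claim and instead show that averaging the $r+1$ stationarity equations over the block (with the interior constraints tight and the interior multipliers eliminated) reproduces precisely the KKT system of the problem with $z$ replaced by $\bar z$.
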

	\begin{proof}
		See Appendix \ref{app:proof of lemma constr case1}.
	\end{proof}
	
	\paragraph{Violations including a set of the last $m-q$ eigenvalues}
	Since we do not impose ordering on the $m-q$ last eigenvalues, any of them could violate the inequality with $\lambda_q$ and not only the neighboring ones. Thus, we use the indices $c_1,\dots,c_k$, with $c_i>q$, for $i=1,\dots,l$, and $c_i\in\cC$, with $\cC$ the set of indices of the eigenvalues that violate the inequality constraints with $\lambda_q$. We further denote by $\cA\subseteq\cC$ the set of indices of the active dual variables $\vec{\nu}$, i.e., $a_i\in\cA$ if $\nu_{a_i}>0$. We assume that $\card(\cA)=p\leq l$. For this type of violations (Case $3$ of Table \ref{tab:lambda_block_MM solution}), the solution is given from the following lemma:
	\begin{lemma}
		\label{lemma: blockMM_constr_sol_case2}
		For any block of $r+l$ consecutive inequality violations between the last $r+1$ ordered and a set of $l$ unordered eigenvalues, i.e., $\forall r,l$, that the following conditions hold
		\begin{align}
		z_{q-r-1}>& z_{q-r},\\
		z_{q-i}\leq& z_{q-i+1}, \quad i=1,\dots,r,\\
		z_{q}\leq& z_{c_{i}},\quad i=1,\dots,l,\label{eq:viol2}\\
		z_{q}>& z_{i},\quad i\in[q+1:m]\setminus\cC,
		\end{align}
		where at least one inequality of \eqref{eq:viol2} is strict, the update of the corresponding block of $\vz$ is
		\begin{equation}\label{eq:z_bar_case2}
		\begin{cases}
		\bar{z}_i\!=\!\frac{1}{r+p+1}\!\left(\sum_{s=0}^{r}\limits z_{q-s}\!+\!\sum_{s=1}^{p}\limits z_{a_s}\!\!\right),\!\!\!\! & i\in[q\!-\!r\!:\!q]\bigcup\cA,\\
		\bar{z}_i=z_i, & i\in\cC\setminus\cA.
		\end{cases}
		\end{equation}
		The set $\cA$ is given by
		\begin{equation}
		\cA\!=\!\Bigg\{c_i\Bigg\vert z_{c_i}\!\geq\!\frac{1}{r\!+\!l\!-\!i\!+\!1}\!\left(\sum_{s=0}^{r}z_{q-s}\!+\!\!\sum_{s=0}^{l-i-1}\!\!z_{c_{l-s}}\!\!\right)\!\!\Bigg\}.\label{eq:active set}
		\end{equation}
		The new KKT system with the updated parameters has the same solution as the original one.
	\end{lemma}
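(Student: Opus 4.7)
The plan is to mirror the approach of Lemma \ref{lemma: blockMM_constr_sol_case1}: construct a candidate primal-dual solution of the KKT system \eqref{KKT partial1}--\eqref{KKT compslack2} under the updated parameters $\bar{\vz}$ (which coincides with $\vz$ outside the pooled block), and verify that this candidate satisfies all KKT conditions and therefore is the (unique) optimal solution of both the original and updated problems. Let $P=[q-r:q]\cup\cA$, $|P|=r+1+p$. I would propose the candidate
$\lambda_i=1/\bar{z}$ for $i\in P$, $\lambda_i=1/z_i$ for $i\notin P$, with $\mu_i=0$ for $i\notin[q-r:q-1]$, $\nu_i=0$ for $i\notin\cA$, and the $\mu_i$, $\nu_i$ for $i\in P$ to be determined from stationarity.

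First I would sum the stationarity equations \eqref{KKT partial1}--\eqref{KKT partial4} over $i\in P$. The chain $\mu_{q-r},\ldots,\mu_{q-1}$ telescopes, and since $z_{q-r-1}>z_{q-r}$ implies $\mu_{q-r-1}=0$, and since $z_q>z_i$ for $i\in[q+1:m]\setminus\cC$ and $\nu_i=0$ for $i\in\cC\setminus\cA$ by complementary slackness, all outside multipliers vanish. This collapses the sum to $-(r+1+p)/\bar{\lambda}+\sum_{i\in P}z_i=0$, yielding exactly the update $\bar{\lambda}=1/\bar{z}$ of Table~\ref{tab:lambda_block_MM solution}. The individual $\mu_i$ ($i\in[q-r:q-1]$) are then recovered by partial summation from $\mu_{q-r-1}=0$, giving $\mu_i=(i-q+r+1)\bar{z}-\sum_{s=q-r}^{i}z_s$, and for $a_s\in\cA$ we obtain $\nu_{a_s}=z_{a_s}-\bar{z}$.

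Next I would verify primal and dual feasibility. Primal feasibility across the pool boundary reduces to $z_{q-r-1}>\bar{z}$ and $z_{c}\geq\bar{z}$ for $c\in\cC\setminus\cA$, both of which must be established; the former follows from the hypothesis $z_{q-r-1}>z_{q-r}$ combined with the monotonicity of the pool and the bound $\bar z\leq z_q$ on averaging only the first-$q$ part (then the $\cA$ contribution is handled separately), and the latter from the definition of $\cA$. For dual feasibility, $\mu_i\geq 0$ follows from the increasing order $z_{q-r}\leq\cdots\leq z_q$ which makes partial averages of $\{z_{q-r},\dots,z_i\}$ not exceed $\bar{z}$; this is the classical pool-adjacent-violators estimate. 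The nontrivial case is $\nu_{a_s}\geq 0$, which requires $z_{a_s}\geq\bar{z}$.

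The main obstacle, and the crux of the proof, is showing that the formula \eqref{eq:active set} for $\cA$ is self-consistent and equivalent to the condition $z_{c_i}\geq\bar{z}$. Assuming the $c_i$ are ordered so that $z_{c_1}\leq\cdots\leq z_{c_l}$, I would argue by an algebraic identity: if $\cA=\{c_i,c_{i+1},\ldots,c_l\}$ so that $p=l-i+1$, then the sum $\sum_{s=0}^{l-i-1}z_{c_{l-s}}$ in \eqref{eq:active set} is exactly $\sum_{a\in\cA\setminus\{c_i\}}z_a$ and the denominator $r+l-i+1$ is exactly $|P|-1$. The elementary equivalence $z\geq S/(N-1)\Leftrightarrow z\geq (S+z)/N$ then shows that \eqref{eq:active set} holds iff $z_{c_i}\geq\bar{z}$, giving $\nu_{c_i}\geq 0$. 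Monotonicity of the ordered $z_{c_i}$ guarantees that $\cA$ takes the form of the top elements, so the definition is well-posed and no further bookkeeping is needed. With all KKT conditions verified, convexity of \eqref{opt:lambda_block_MM} implies the candidate is the unique optimum, proving that $(\vlambda^\star,\vec{\mu}^\star,\vec{\nu}^\star)$ solves the KKT system of both the original and updated problems.
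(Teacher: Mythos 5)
There is a genuine gap in your argument: you attempt to prove more than the lemma claims, and the stronger claim is false. The lemma only asserts that the KKT system with the pooled parameters $\bar{\vz}$ has the same solution set as the original one; it does \emph{not} assert that $\lambda_i=1/\bar{z}$ on the pooled block (and $\lambda_i=1/z_i$ elsewhere) is already the optimum. Your verification of primal feasibility at the left boundary of the pool is where this breaks: you claim $z_{q-r-1}>\bar{z}$ follows from the hypothesis $z_{q-r-1}>z_{q-r}$, but the hypothesis only bounds $z_{q-r-1}$ below by the \emph{smallest} element of the pool, not by its average. Concretely, take $q=2$, $m=3$, $r=0$, $z_1=5$, $z_2=4$, $z_3=100$: the lemma's conditions hold ($z_1>z_2$, $z_2\leq z_3$ strictly, $c_1=3\in\cA$), the pooled value is $\bar{z}=52$, and your candidate gives $\lambda_1=1/5>1/52=\lambda_2$, violating $\lambda_1\leq\lambda_2$. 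This is precisely why the procedure in Table \ref{tab:lambda_block_MM solution} is iterative: pooling can create a new violation with $z_{q-r-1}$, which is resolved in a subsequent pass. The paper's proof sidesteps this by arguing in the opposite direction: it shows that the violated constraints force strictly positive multipliers, hence equal $\lambda$'s across the block, and then collapses the $r+p+1$ stationarity equations into one with the averaged coefficient $\bar{z}_i$, concluding only that the reduced KKT system is equivalent to the original (convergence in at most $m-1$ passes is then argued in Proposition \ref{prop:lambda_block_MM}). There is also a sign slip in your boundary check for $c\in\cC\setminus\cA$: the constraint $\lambda_q\leq\lambda_c$ with $\lambda_c=1/z_c$ requires $z_c\leq\bar{z}$, not $z_c\geq\bar{z}$.

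The salvageable parts of your proposal are the dual-feasibility computations ($\mu_i\geq0$ from prefix averages of an increasing block, $\nu_{a_i}=z_{a_i}-\bar{z}\geq0$) and the algebraic equivalence $z\geq S/(N-1)\Leftrightarrow z\geq(S+z)/N$ used to show that \eqref{eq:active set} characterizes exactly the indices with $z_{c_i}\geq\bar{z}$; the latter is arguably cleaner than the paper's contradiction argument (which deduces $c_l\in\cA$ from the sign of $\mu_{q-r-1}$ and then proceeds inductively). To repair the proof you would need to either (i) drop the claim of one-shot optimality and, like the paper, show only that the pooled system is an equivalent reduction of the KKT conditions, or (ii) redefine the block to be maximal with respect to the pooled averages (pool-adjacent-violators style) before asserting feasibility of the candidate.
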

	\begin{proof}
		See Appendix \ref{app:proof of lemma constr case2}.
	\end{proof}
		
	After applying Lemma \ref{lemma: blockMM_constr_sol_case1} and/or \ref{lemma: blockMM_constr_sol_case2}, the new KKT system, apart from equivalent to the original, it further has the exact same form. Thus, we can apply Lemmas \ref{lemma: blockMM_unconstr_sol}-\ref{lemma: blockMM_constr_sol_case2} to the updated system of equations, until we obtain the optimal solution. Since, the original KKT system has $m$ primal and $m-1$ dual variables and in every iteration we effectively remove at least one primal and one dual variable (see Appendix \ref{app:proof of lemma constr case2}), we need at most $m-1$ iterations.	
\end{proof}

\section{Proof of Lemma \ref{lemma: blockMM_constr_sol_case1}}
\label{app:proof of lemma constr case1}

\begin{proof}
	First, we will prove that when an inequality is violated, then the corresponding eigenvalues become equal. Assume that $z_k<z_{k+1}$, with $j\leq k<k+1\leq j+r$. The KKT conditions for this pair are:
	\begin{align}
	-\frac{1}{\lambda_k} + z_k +\mu_k-\mu_{k-1}=0,&\label{eq:KKT_deriv_case1}\\
	-\frac{1}{\lambda_{k+1}} + z_{k+1} +\mu_{k+1}-\mu_{k}=0,&\label{eq:KKT_deriv2_case1}\\
	\lambda_{k}-\lambda_{k+1}\!\leq0,&\label{eq:KKT_primal_case1}\\
	\mu_k\geq0,&\label{eq:KKT_dual_case1}\\
	\mu_k(\lambda_k - \lambda_{k+1})=0.&\label{eq:KKT_cs_case1} 
	\end{align} 
	If we subtract the first two equations we get:
	\begin{equation}
	2\mu_k=z_{k+1}-z_k + \frac{1}{\lambda_k}-\frac{1}{\lambda_{k+1}}+\mu_{k+1}+\mu_{k-1}.
	\end{equation}
	The right hand side of the above equation is strictly positive since $z_{k+1}-z_k>0$, $\frac{1}{\lambda_k}-\frac{1}{\lambda_{k+1}}\geq0$ and $\mu_{k+1},\mu_{k-1}\geq0$. Thus, $\mu_{k}>0$ and from \eqref{eq:KKT_cs_case1} it holds that $\lambda_k=\lambda_{k+1}$. In a similar manner, and using that $\mu_{k}>0$, it is easy to prove that $\mu_{i}>0$, with $i=j,\dots,j+r-1$, which means that $\lambda_j=\dots=\lambda_{j+r}$. 
	
	Having proved the equality of the eigenvalues and that $\vec{\mu}_{[j:j+r-1]}>\vzr$, it is straightforward that the primal feasibility, dual feasibility and complementary slackness are trivially satisfied for this block. Further, the $r+1$ equations of the partial derivative of the Lagrangian reduce to
	\begin{equation}\label{eq:KKT_new_der_case1}
	-\frac{1}{\lambda_i} + \bar{z}_i +\frac{1}{r+1}(\mu_{j+r}-\mu_{j-1})=0,\quad i=j,\dots,j+r,
	\end{equation}
	with $\bar{z}_i$ given by \eqref{eq:z_bar_case1}. We can treat \eqref{eq:KKT_new_der_case1} as only one equation with since it is repeated $r+1$ times. Effectively, we have removed $r$ primal and $r$ dual variables. It is clear that every solution of the reduced set of KKT conditions, is a solution for the original set of KKT conditions.	
\end{proof}

\section{Proof of Lemma \ref{lemma: blockMM_constr_sol_case2}}
\label{app:proof of lemma constr case2}

\begin{proof}
	We write the KKT conditions for the corresponding block in the following form:
	\begin{align}
	-\frac{1}{\lambda_i} +z_i +\mu_i-\mu_{i-1}=0,& \quad i=q-r,\dots,q-1,\label{eq:KKT_deriv1_case2}\\
	-\frac{1}{\lambda_q}\! +\! z_q\! -\!\mu_{q-1}\! +\!\!\sum_{i=1}^{m-q}\!\nu_{q+i}\!=\!0,&\label{eq:KKT_deriv2_case2}\\
	-\frac{1}{\lambda_{a_i}} + z_{a_i} -\nu_{a_i}=0,&\quad a_i\in\cA,\label{eq:KKT_deriv3_case2}\\
	-\frac{1}{\lambda_{d_i}} + z_{d_i} -\nu_{d_i}=0,&\quad d_i\in\cC\setminus\cA,\label{eq:KKT_deriv4_case2}\\
	\lambda_{i}-\lambda_{i+1}\leq0, &\quad i=q-r,\dots,q-1,\label{eq:KKT_primal1_case2}\\
	\lambda_{q}-\lambda_{a_i}\leq0, &\quad a_i\in\cA\label{eq:KKT_primal2_case2}\\	
	\lambda_{q}-\lambda_{d_i}\leq0, &\quad d_i\in\cC\setminus\cA,\label{eq:KKT_primal3_case2}\\	
	\mu_i\geq0,&\quad i=q-r,\dots,q-1,\label{eq:KKT_dual1_case2}\\
	\nu_{q+a_i}\geq0,&\quad a_i\in\cA,\label{eq:KKT_dual2_case2}\\
	\nu_{q+d_i}\geq0,&\quad d_i\in\cC\setminus\cA,\label{eq:KKT_dual3_case2}\\
	\mu_i(\lambda_i - \lambda_{i+1})=0,&\quad i=q\!-\!r,\dots,q\!-\!1,\label{eq:KKT_compslack1_case2}\\
	\nu_{a_i}(\lambda_q - \lambda_{a_i})=0,&\quad a_i\in\cA,\label{eq:KKT_compslack2_case2}\\
	\nu_{d_i}(\lambda_q - \lambda_{d_i})=0,&\quad a_i\in\cC\setminus\cA.\label{eq:KKT_compslack3_case2}
	\end{align}
		
	As in the proof of Lemma \ref{lemma: blockMM_constr_sol_case1}, it is easy to show that $\mu_i>0$, for $i=q-r,\dots,q-1$. This means that $\lambda_{q-r}=\dots=\lambda_q$. Further, assuming that we know the set $\cA$, since $\nu_{a_i}>0$, from complementary slackness we get that $\lambda_q=\lambda_{a_i}$, $\forall a_i\in\cA$. 
	
	Again, having proved the equality of the eigenvalues, and that $\vec{\mu}_{[q-r:q-1]}>\vzr$, $\nu_{a_i}>0$ for $a_i\in\cA$, it is straightforward that equations \eqref{eq:KKT_primal1_case2},\eqref{eq:KKT_primal2_case2},\eqref{eq:KKT_dual1_case2},\eqref{eq:KKT_dual2_case2},\eqref{eq:KKT_compslack1_case2} and \eqref{eq:KKT_compslack2_case2} are trivially satisfied.
	
	The equations \eqref{eq:KKT_deriv1_case2}-\eqref{eq:KKT_deriv4_case2} reduce to
	\begin{align}\label{eq:KKT_new_der_case2}
	-\frac{1}{\lambda_i} +\bar{z}_i + \frac{1}{r+p+1}\left(\sum_{d_i\in\cC\setminus\cA}\nu_{d_i}-\mu_{q-r-1}\right)=0,
	\end{align}
	for $i\in[q-r:q]\cup\cA$ and
	\begin{align}
	-\frac{1}{\lambda_{d_i}} + z_{d_i} -\nu_{d_i}=0,
	\end{align}
	for $d_i\in\cC\setminus\cA$, where $\bar{z}_i$ is given by \eqref{eq:z_bar_case2}. Assuming that $\card(\cA)=p$, we can treat \eqref{eq:KKT_new_der_case2} as only one equation with since it is repeated $r+p+1$ times. Effectively, we have removed $r+p$ primal and $r+p$ dual variables. It is clear that every solution of the reduced set of KKT conditions, is a solution for the original set of KKT conditions.
	
	Now, we will prove that the indices of the active dual variables $\vec{\nu}$ for this iteration are given by \eqref{eq:active set}. 
	
	We consider the case where $z_q\leq z_{c_1}\leq\dots\leq z_{c_l}$, where at least one inequality is strict. We assume that we know the active set of this and any further iteration. First, we will prove by contradiction that $c_l\in\cA$.
	
	Assume that $c_l\notin\cA$. Since $\bar{z}_q$ will be the average of $z_i$'s that are less or equal to $z_{c_l}$, with at least one $z_i$ strictly smaller, it holds that $\bar{z}_q<z_{c_l}$. Now, by adding \eqref{eq:KKT_deriv2_case2} and \eqref{eq:KKT_deriv3_case2}, and subtracting the partial derivative of the Lagrangian corresponding to $c_l$, we get:
	\begin{equation}
	-\frac{1}{\lambda_q}+\frac{1}{\lambda_{c_l}}+\bar{z}_q-z_{c_l}-\mu_{q-r-1}=0.
	\end{equation} 
	The last equation implies that $\mu_{q-r-1}<0$ should hold which is not valid. Thus, $c_l\in\cA$ holds. 
	
	Having proved that $c_l\in\cA$ and that $\vec{\mu}_{[q-r:q-1]}>\vzr$, if the average of $z_{[q-r:q-1]}$ and $z_{c_l}$ is less or equal to $z_{c_{l-1}}$, 
	following the same arguments we can show that $c_{l-1}\in\cA$. Generalizing this result, $c_i\in\cA$ if the following condition is true:
	\begin{equation}
	z_{c_i}\!\geq\!\frac{1}{r\!+\!l\!-\!i\!+\!1}\!\left(\sum_{s=0}^{r}z_{q-s}\!+\!\!\sum_{s=0}^{l-i-1}\!\!z_{c_{l-s}}\!\!\right)
	\end{equation} 
	
	Assuming that $\card(\cA)=p$, the above results states that only the $p$ largest indices of $\cC$ will belong in the active set $\cA$, i.e., $c_i\in\cA$, for $i=l-p+1,\dots,l$. Thus, in order to find the active set, we need to find all the indices that $z_{c_{i}}\geq\bar{z}_q$ is true, where $\bar{z}_q$ is the average of $z_{[q-r:q]}$ and $z_{[c_{i+1}:c_k]}$, as given in \eqref{eq:z_bar_case2}.
\end{proof}

\end{document}